\DeclareMathOperator*{\argmin}{arg\,min}
\DeclareMathOperator*{\argmax}{arg\,max}
\newcommand{\bx}{\mathbf{x}}
\newcommand{\hb}{H_{\mathcal{B}}}
\newcommand{\by}{\mathbf{y}}
\newcommand{\bmu}{\boldsymbol{\mu}}
\newcommand{\btheta}{\boldsymbol{{\boldsymbol{\theta}}}}
\newcommand{\dt}[2]{\left\langle #1,#2 \right\rangle}
\newcommand*{\defeq}{\mathrel{\vcenter{\baselineskip0.5ex \lineskiplimit0pt
                     \hbox{\scriptsize.}\hbox{\scriptsize.}}}%
                     =}
\newcommand{\Ex}{\mathbb{E}}
\newtheorem{proposition}{Proposition}
\newtheorem{definition}{Definition}
\newtheorem{corollary}{Corollary}
\newcommand{\comment}[1]{}
\title{Bethe Projections for Non-Local Inference}
\author{ {\bf Luke Vilnis\textsuperscript{*}} \\
UMass Amherst\\
luke@cs.umass.edu\\
\And
{\bf David Belanger\textsuperscript{*}}  \\
UMass Amherst          \\
belanger@cs.umass.edu\\
\And
{\bf Daniel Sheldon}  \\
UMass Amherst          \\
sheldon@cs.umass.edu\\
\And
{\bf Andrew McCallum}  \\
UMass Amherst          \\
mccallum@cs.umass.edu\\
}
\begin{document}

\maketitle

\let\thefootnote\relax\footnote{\textsuperscript{*} Equal contribution.}

\begin{abstract}
Many inference problems in structured prediction are naturally solved by augmenting a tractable dependency structure with complex, non-local auxiliary objectives. This includes the mean field family of variational inference algorithms, soft- or hard-constrained inference using Lagrangian relaxation or linear programming, collective graphical models, and forms of semi-supervised learning such as posterior regularization. We present a method to discriminatively \emph{learn} broad families of inference objectives, capturing powerful non-local statistics of the latent variables, while maintaining tractable and provably fast inference using non-Euclidean projected gradient descent with a distance-generating function given by the Bethe entropy. We demonstrate the performance and flexibility of our method by (1) extracting structured citations from research papers by learning soft global constraints, (2) achieving state-of-the-art results on a widely-used handwriting recognition task using a novel learned non-convex inference procedure, and (3) providing a fast and highly scalable algorithm for the challenging problem of inference in a collective graphical model applied to bird migration.
\end{abstract}

\section{INTRODUCTION}
\label{introduction}

Structured prediction has shown great success in modeling problems with complex dependencies between output variables. Practitioners often use undirected graphical models, which encode conditional dependency relationships via a graph. However, the tractability of exact inference in these models is limited by the graph's \emph{treewidth}, often yielding a harsh tradeoff between model expressivity and tractability.

Graphical models are good at modeling local dependencies between variables, such as the importance of surrounding context in determining the meaning of words or phrases. However, their sensitivity to cyclic dependencies often renders them unsuitable for modeling preferences for certain globally consistent states. For example, in the canonical NLP task of part-of-speech tagging, there is no clear way to enforce the constraint that every sentence have at least one verb without increasing the likelihood that \emph{every} token is predicted to be a verb.

Concretely, exact marginal inference in a discrete graphical model can be posed as the following optimization problem
\begin{align}
\bmu^* =  \argmin_{\bmu \in \mathcal{M}} & \;  -H(\bmu) -  \dt{\btheta}{\bmu}, \label{eq:kl2}
\end{align}
where $\bmu$ is a concatenated vector of node and clique marginals, $H(\bmu)$ is the entropy, $\mathcal{M}$ is the marginal polytope, and ${\boldsymbol{\theta}}$ are parameters. Here we face a tradeoff: adding long-range dependencies directly to the model increases the clique size and thus the complexity of the problem and size of $\bmu$, rendering inference intractable. However, the linear scoring function $\btheta$ breaks down over cliques, preventing us from enforcing global regularities in any other way. In this work, we propose to augment the inference objective \eqref{eq:kl2} and instead optimize 
\begin{align}
  \bmu^* = \argmin_{\bmu \in \mathcal{M}} -H(\bmu) - \dt{\btheta}{\bmu} +  L_{\boldsymbol{\psi}}(\bmu).  \label{eq:aug-inf2}
\end{align}
Here, $L_{\boldsymbol{\psi}}$ is some arbitrary parametric function of the entire concatenated marginal vector, where ${\boldsymbol{\psi}}$ may depend on input features. Since $L_{\boldsymbol{\psi}}$ is non-linear, it can enforce many types of non-local properties. Interestingly, whenever $L_{\boldsymbol{\psi}}$ is convex, and whenever inference is easy in the underlying model, i.e., solving~\eqref{eq:kl2} is tractable, we can solve~\eqref{eq:aug-inf2} using non-Euclidean projected gradient methods using the Bethe entropy as a distance-generating function.  Unlike many message-passing algorithms, our procedure maintains primal feasibility across iterations, allowing its use as an ~\emph{anytime} algorithm. Furthermore, for non-convex $L_{\boldsymbol{\psi}}$, we also show convergence to a local optimum of~\eqref{eq:aug-inf2}. Finally, we present algorithms for discriminative learning of the parameters ${\boldsymbol{\psi}}$. In a slight abuse of terminology, we call $L_{\boldsymbol{\psi}}$ a \emph{non-local energy function}.

Ours is not the first work to consider modeling global preferences by augmenting a tractable base inference objective with non-local terms. For example, generalized mean-field variational inference algorithms augment a tractable distribution (the $Q$ distribution) with a non-linear, non-convex global energy function that scores terms in the full model (the $P$ distribution) using products of marginals of $Q$~\citep{wainwright2008graphical}. This is one special case of our non-local inference framework, and we present algorithms for solving the problem for much more general $L_{\boldsymbol{\psi}}$, with compelling applications.

Additionally, the modeling utility provided by global preferences has motivated work in \emph{dual decomposition}, where inference in loopy or globally-constrained models is decomposed into repeated calls to inference in tractable independent subproblems~\citep{komodakis2007mrf,sontag2011introduction}. It has seen wide success due to its ease of implementation, since it reuses existing inference routines as black boxes. However, the technique is restricted to modeling linear constraints, imposed \emph{a priori}. Similarly, these types of constraints have also been imposed on expectations of the posterior distribution for use in semi-supervised learning, as in~\emph{posterior regularization} and \emph{generalized expectation}~\citep{ganchev2010posterior,mann2010generalized}. In contrast, our methods are designed to discriminatively learn expressive inference procedures, with minimal domain knowledge required, rather than regularizing inference and learning.

First, we provide efficient algorithms for solving the marginal inference problem \eqref{eq:aug-inf2} and performing MAP prediction in the associated distribution, for both convex and non-convex global energy functions.  After that, we provide a learning algorithm for $\btheta$ and the parametrized $L_{\boldsymbol{\psi}}$ functions using an interpretation of~\eqref{eq:aug-inf2} as approximate variational inference in a a probabilistic model. All of our algorithms are easy to implement and rely on simple wrappers around black-box inference subroutines.  

Our experiments demonstrate the power and generality of our approach by achieving state-of-the-art results on several tasks. We extract accurate citations from research papers by learning discriminative global regularities of valid outputs, outperforming a strong dual decomposition-based baseline~\citep{anzaroot2014learning}. In a benchmark OCR task~\citep{koller2004max}, we achieve state-of-the-art results with a learned non-convex, non-local energy function, that guides output decodings to lie near dictionary words. Finally, our general algorithm for solving \eqref{eq:aug-inf2} provides large speed improvements for the challenging task of inference in chain-structured \emph{collective graphical models} (CGMs), applied to bird migration~\citep{sheldon2011collective}. 

\section{BACKGROUND}

Let $\by = (y_1,\ldots, y_n)$ denote a set of discrete variables and $\bx$ be a collection of input features. We define the conditional distribution $P_{\boldsymbol{\theta}}(\by|\bx) = \exp(\dt{{\boldsymbol{\theta}}(\bx)}{S(\by)})/Z$, where $S(\by)$ is a mapping from $\by$ to a set of sufficient statistics, ${\boldsymbol{\theta}}(\bx)$ is a differentiable vector-valued mapping,  and $Z = \sum_\by \exp(\dt{{\boldsymbol{\theta}}}{S(\by)})$. Conditional random fields (CRFs) assume that $(y_1,\ldots, y_n)$ are given a graph structure and $S(\by)$ maps $\by$ to a 0-1 vector capturing joint settings of each clique \citep{lafferty2001conditional}. Going forward, we often suppress the explicit dependency of ${\boldsymbol{\theta}}$ on $\bx$. For fixed ${\boldsymbol{\theta}}$, the model is called a Markov random field (MRF). 
 
Given a distribution $P(\by)$, define the expected sufficient statistics operator $\mu(P) = \Ex_{P}[S(\by)]$.  For the CRF  statistics $S(\by)$ above, $\bmu$ is a concatenated vector of node and clique marginals. Therefore, \textit{marginal inference}, the task of finding the marginal distribution of $P_{\boldsymbol{\theta}}(\by|\bx)$ over $\by$, is equivalent to computing the expectation $\mu(P_{\boldsymbol{\theta}}(\by|\bx))$.

For tree-structured graphical models, $P_{\boldsymbol{\theta}}(\by | \bx) \longleftrightarrow \bmu(P_{\boldsymbol{\theta}}(\by | \bx))$ is a bijection, though this is not true for general graphs. Furthermore, for trees the entropy  $H(P_{\boldsymbol{\theta}}(\by | \bx))$ is equal to the Bethe entropy $\hb\left(\bmu(P_{\boldsymbol{\theta}}(\by | \bx))\right)$, defined, for example, in~\citet{wainwright2008graphical}. The \textit{marginal polytope} $\mathcal{M}$ is the set of $\bmu$ that correspond to some $P_{\boldsymbol{\theta}}$.

As mentioned in the introduction, marginal inference can be posed as the optimization problem \eqref{eq:kl2}. MAP inference finds the joint setting $\by$ with maximum probability. For CRFs, this is equivalent to
\begin{align}
\argmin_\by \dt{-{\boldsymbol{\theta}}(\bx)}{S(\by)}. \label{eq:MAP}
\end{align}
For tree-structured CRFs, marginal and MAP inference can be performed efficiently using dynamic programming. Our experiments focus on such graphs. However, the inference algorithms we present can be extended to general graphs wherever marginal inference is tractable using a convex entropy approximation and a local polytope relaxation.

\section{MARGINAL INFERENCE WITH NON-LOCAL ENERGIES}
We move beyond the standard inference objective~\eqref{eq:kl2}, augmenting it with a non-local energy term as in \eqref{eq:aug-inf2}:
\begin{align*}
  \bmu^* = \argmin_{\bmu \in \mathcal{M}} -\hb(\bmu) - \dt{{\boldsymbol{\theta}}}{\bmu }+  L_{\boldsymbol{\psi}}(\bmu).
\end{align*}

Here, $L_{\boldsymbol{\psi}}$ is some arbitrary parametrized function of the marginals, and ${\boldsymbol{\psi}}$ may depend on input features $\bx$.

Intuitively, we are augmenting the inference objective \eqref{eq:kl2} by allowing it to optimize a broader set of tradeoffs -- not only between expected node scores, clique scores, and entropy, but also global functions of the marginals. To be concrete, in our citation extraction experiments (Section~\ref{citation-experiments}), for example, we employ the simple structure:
\begin{align}
L_{\boldsymbol{\psi}}(\bmu) = \sum_j {\boldsymbol{\psi}}_j \ell_j(\bmu) \label{eq:ell1},
\end{align}
Where each $\ell_j$ is a univariate convex function and each ${\boldsymbol{\psi}}_j$ is constrained to be non-negative, in order to maintain the overall convexity of $L_{\boldsymbol{\psi}}$. We further employ
\begin{align}
\ell_j(\bmu) = \tilde{\ell}_j\left(a_j^\top\bmu\right),\label{eq:ell2}
\end{align}
where $a_j$ encodes a `linear measurement' of the marginals and $\tilde{\ell}_j$ is some univariate convex function. 

\section{VARIATIONAL INTERPRETATION AND MAP PREDICTION}
\label{sec:var-and-map}

We next provide two complementary interpretations of~\eqref{eq:aug-inf2} as variational inference in a class of tractable probability distributions over $\by$. They yield precisely the same variational expression. However, both are useful because the first helps motivate a MAP prediction algorithm, while the second helps characterize our learning algorithm in Section~\ref{sec:learning} as (approximate) variational EM. 

\begin{proposition}

\label{prop:var}
For fixed ${\boldsymbol{\theta}}$ and $L_{\boldsymbol{\psi}}$, the output $\bmu^*$ of inference in the augmented objective~\eqref{eq:aug-inf2} is equivalent to the output of standard inference \eqref{eq:kl2}  in an MRF with the same clique structure as our base model, but with a modified parameter $\tilde{{\boldsymbol{\theta}}} = {\boldsymbol{\theta}} -\nabla L_{\boldsymbol{\psi}}(\bmu^*)$ . 
\end{proposition}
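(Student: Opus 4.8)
The plan is to derive the optimality condition for the augmented problem~\eqref{eq:aug-inf2} and observe that it coincides with the optimality condition of a standard marginal inference problem~\eqref{eq:kl2} with a shifted parameter. First I would write the augmented objective as $F(\bmu) = -\hb(\bmu) - \dt{\btheta}{\bmu} + L_{\boldsymbol{\psi}}(\bmu)$ over $\bmu \in \Marg$. Assuming $\bmu^*$ lies in the relative interior of $\Marg$ (so that $\nabla \hb$ is finite there, as is standard for the Bethe/exact entropy on trees), the first-order stationarity condition for $\bmu^*$ is that $-\nabla \hb(\bmu^*) - \btheta + \nabla L_{\boldsymbol{\psi}}(\bmu^*)$ lies in the normal cone to $\Marg$ at $\bmu^*$. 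Rearranging, this says $-\nabla \hb(\bmu^*) - \big(\btheta - \nabla L_{\boldsymbol{\psi}}(\bmu^*)\big)$ is in the normal cone at $\bmu^*$, which is exactly the stationarity condition of the \emph{unaugmented} objective $-\hb(\bmu) - \dt{\tilde\btheta}{\bmu}$ with $\tilde\btheta \defeq \btheta - \nabla L_{\boldsymbol{\psi}}(\bmu^*)$.

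Next I would invoke the fact, recalled in the Background section, that for tree-structured models standard marginal inference~\eqref{eq:kl2} is a strictly convex problem whose unique minimizer is $\bmu(P_{\tilde\btheta})$, and that the map $P_{\tilde\btheta} \longleftrightarrow \bmu(P_{\tilde\btheta})$ is a bijection. Strict convexity of $-\hb$ on the interior of $\Marg$ means the stationarity condition characterizes the global optimum, so $\bmu^*$ is precisely the marginal vector produced by standard inference in the MRF with parameter $\tilde\btheta = \btheta - \nabla L_{\boldsymbol{\psi}}(\bmu^*)$, which has the same clique structure as the base model since only the linear term was modified. This establishes the claimed equivalence. (For the non-convex $L_{\boldsymbol{\psi}}$ case, the same manipulation shows that any stationary/locally optimal $\bmu^*$ of~\eqref{eq:aug-inf2} satisfies the fixed-point relation, so the statement holds at local optima as well.)

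The main obstacle is handling the boundary of the marginal polytope and the (non-)differentiability of $\hb$ there: $-\hb$ has gradient blowing up as marginals approach $0$, so I must argue that the relevant optimum lies in the relative interior, or else phrase the first-order condition in terms of subgradients/the extended-value normal cone rather than plain gradients. I would address this by noting that $-\hb(\bmu) \to \infty$ (with gradient diverging) as $\bmu$ approaches the boundary whenever $L_{\boldsymbol{\psi}}$ and $\btheta$ are finite, so the minimizer is interior and the gradient form of the condition is valid. A secondary subtlety is that the proposition implicitly treats $\nabla L_{\boldsymbol{\psi}}(\bmu^*)$ as a fixed vector once $\bmu^*$ is known — i.e., it is a \emph{self-consistent} fixed-point characterization, not a reduction that lets us skip solving~\eqref{eq:aug-inf2} — and I would state this explicitly so the equivalence is not misread as a claim that a single call to base inference suffices.
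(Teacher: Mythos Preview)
Your proposal is correct and follows essentially the same approach as the paper: both arguments write the first-order stationarity condition for~\eqref{eq:aug-inf2} (the paper via a Lagrangian with constraint terms $C(\bmu,\boldsymbol{\lambda})$, you via the normal cone to $\Marg$) and observe that, after grouping $\btheta - \nabla L_{\boldsymbol{\psi}}(\bmu^*)$, it coincides with the stationarity condition of the standard problem~\eqref{eq:kl2} with parameter $\tilde\btheta$. Your additional remarks on interiority, strict convexity, and the fixed-point nature of the characterization are sound refinements but go beyond what the paper's (terse) proof actually supplies.
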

\begin{proof}

\renewcommand{\qedsymbol}{}
Forming a Lagrangian for~\eqref{eq:aug-inf2}, the stationarity conditions with respect to the variable $\bmu$ are:
\begin{align}
  0 &= -({\boldsymbol{\theta}} - \nabla L_{\boldsymbol{\psi}}(\bmu^*)) -\nabla\hb(\bmu^*) + \nabla_{\bmu}C(\bmu,\boldsymbol{\lambda}),  \label{eq:aug-inf-opt}
\end{align}
where $C(\bmu,\boldsymbol{\lambda})$ are collected terms relating to the marginal polytope constraints. The proposition follows because~\eqref{eq:aug-inf-opt} is the same as the stationarity conditions for
\begin{align}
  \bmu^* = \argmin_{\bmu \in \mathcal{M}} -\dt{{\boldsymbol{\theta}} -\nabla L_{\boldsymbol{\psi}}(\bmu^*)}{\bmu} -\hb(\bmu).  \; \; \square \label{eq:aug-inf-grad-param}
\end{align}
\vspace{-15pt}
\end{proof}

Therefore, we can characterize a joint distribution over $\by$ by first finding $\bmu^*$ by solving~\eqref{eq:aug-inf2} and then defining an MRF over $\by$ with parameters $\tilde{{\boldsymbol{\theta}}}$. Even more conveniently, our inference technique in Section~\ref{sec:optimization} iteratively estimates $\tilde{{\boldsymbol{\theta}}}$ on the fly, namely via the dual iterate ${\boldsymbol{\theta}}_t$ in Algorithm~\ref{alg:b-rda}.

Ultimately, in many prediction problems we seek a single output configuration $\by$ rather than an inferred distribution over outputs. Proposition~\ref{prop:var} suggests a simple prediction procedure: first, find the variational distribution over $\by$ parametrized as an MRF with parameter $\tilde{{\boldsymbol{\theta}}}$. Then, perform MAP in this MRF. Assuming an available marginal inference routine for this MRF, we assume the tractability of MAP -- for example using a dynamic program. We avoid predicting $\by$ by locally maximizing nodes' marginals, since this would not necessarily yield feasible outputs. 

Instead of solving~\eqref{eq:aug-inf2}, we could have introduced global energy terms to the MAP objective~\eqref{eq:MAP} that act directly on values $S(\by)$ rather than on expectations $\bmu$, as in~\eqref{eq:aug-inf2}. However, this yields a difficult combinatorial optimization problem for prediction and does not yield a natural way to learn the parametrization of the global energy. Section~\ref{citation-experiments} demonstrates that using energy terms defined on marginals, and performing MAP inference in the associated MRF, performs as well or better than an LP technique designed to directly perform MAP subject to global penalty terms.

Our second variational interpretation characterizes $\bmu^*$ as a variational approximation to a complex joint distribution: 
\begin{align}
P_c(\by | \bx) = (1/Z_{{\boldsymbol{\theta}},{\boldsymbol{\psi}}})P_{\boldsymbol{\theta}}(\by | \bx)P_{\boldsymbol{\psi}}(\by | \bx).  \label{eq:pc}
\end{align}

We assume that isolated marginal inference in $P_{\boldsymbol{\theta}}(\by | \bx)$ is tractable, while $P_{\boldsymbol{\psi}}(\by | \bx)$ is an alternative structured distribution over $\by$ for which we do not have an efficient inference algorithm.  Specifically, we assume that $\eqref{eq:kl2}$ can be solved for $P_{\boldsymbol{\theta}}$. Furthemore, we assume that $P_{\boldsymbol{\psi}}(\by | \bx) \propto \exp\left(L_{\boldsymbol{\psi}}(S(\by); \bx)\right)$, where $L_{\boldsymbol{\psi}}(\cdot; \bx)$ is a convex function, conditional on input features $\bx$.  Going forward, we will often surpress the dependence of $L_{\boldsymbol{\psi}}$ on $\bx$.  Above, $Z_{{\boldsymbol{\theta}},{\boldsymbol{\psi}}}$ is the normalizing constant of the combined distribution.  Note that if $L$ was linear, inference in both $P_{\boldsymbol{\psi}}(\by | \bx)$ and $P_c(\by | \bx)$ would be tractable, since the distribution would decompose over the same cliques as $P_{\boldsymbol{\theta}}(\by | \bx)$. 

Not surprisingly,~\eqref{eq:pc} is intractable to reason about, due to the non-local terms in~\eqref{eq:aug-inf2}, so we approximate it with a variational distribution $Q(\by)$. The connection between this variational approximation and Proposition~\ref{prop:var} is derived in Appendix~\ref{sec:var-inference}. Here, we assume no clique structure on $Q(\by)$, but show that minimizing a variational approximation of $KL\left(Q(\by) || P_c(\by | \bx\right))$, for a given $\bx$, yields a $Q$ that is parametrized compactly as the MRF in Proposition~\ref{prop:var}. We discuss the relationship between this and general mean-field inference in Section~\ref{sec:rw}.

Although the analysis of this section assumes convexity of $L_{\boldsymbol{\psi}}$, our inference techniques can be applied to non-convex $L_{\boldsymbol{\psi}}$, as discussed in Section~\ref{sec:non-convex}, and our learning algorithm produces state-of-the-art results even in the non-convex regime for a benchmark OCR task. 

\section{RELATED MODELING TECHNIQUES}
\label{sec:rw}
\textbf{Mean field} variational inference in undirected graphical models is a particular application of our inference framework, with a non-convex $L_{\boldsymbol{\psi}}$~\citep{wainwright2008graphical}. 
The technique estimates marginal properties of a complex joint distribution $P$ using the clique marginals $\bmu$ of some tractable base distribution $Q$, not necessarily fully factorized. This induces a partitioning of the cliques of $P$ into those represented directly by $\bmu$ and those where we define clique marginals as a product distribution of the relevant nodes' marginals in $\bmu$. To account for the energy terms of the full model involving cliques absent in the simple base model, the energy $\dt{{\boldsymbol{\theta}}}{\bmu}$ of the base model is augmented with an extra function of $\bmu$. 
\begin{align}
 L(\mu) =  -\sum_{c \in \mathcal{C}} \dt{{\boldsymbol{\theta}}_c}{\bigotimes_{n \in c} \mu_n} \label{eq:mean-field}
\end{align}
where $\mathcal{C}$ is the set of cliques not included in the tractable sub-model, ${\boldsymbol{\theta}}_c$ are the potentials of the original graphical model corresponding to the missing cliques, and $\bigotimes_n \mu_n$ represents a repeated outer (tensor) product of the node marginals for the nodes in those cliques.

Note $L(\bmu) $ is non-linear and non-convex.  
Our work generalizes~\eqref{eq:mean-field} by allowing arbitrary non-linear interaction terms between components of $\bmu$. This is very powerful -- for example, in our citation extraction experiments in Section \ref{citation-experiments}, expressing these global terms in a standard graphical model would require many factors touching all variables. Local coordinate ascent mean-field can be frustrated by these rigid global terms. Our gradient-based method avoids these issues by updating all marginals simultaneously. 

\textbf{Dual decomposition} is a popular method for performing MAP inference in complex structured prediction models by leveraging repeated calls to MAP in tractable submodels~\citep{komodakis2007mrf,sontag2011introduction}. The family of models solvable with dual decomposition is limited, however, because the terms that link the submodels must be expressible as linear constraints.  Similar MAP techniques~\citep{ravikumar2010message,aguiar2011augmented,fu2013bethe} based on the alternating direction method of multipliers (\textbf{ADMM}) can be adapted for marginal inference, in problems where marginal inference in submodels is tractable. However, the non-local terms are defined as linear functions on settings of graphical model nodes, while our non-linear $L_{\boldsymbol{\psi}}(\bmu)$ terms provide practitioners with an expressive means to learn and enforce regularities of the inference output. 

\textbf{Posterior regularization} (PR) \citep{ganchev2010posterior},  \textbf{learning from measurements} (LFM)~\citet{liang2009learning} , and \textbf{generalized expectations} (GE) \citep{mann2010generalized}, are  a family of closely-related techniques for performing unsupervised or semi-supervised learning of a conditional distribution $P_{\boldsymbol{\theta}}(\by | \bx)$ or a generative model $P_{\boldsymbol{\theta}}(\bx | \by)$ using expectation-maximization (EM), where the E-step for latent variables $\by$ does not come directly from inference in the model, but instead from projection onto a set of expectations obeying global regularity properties. In PR and GE, this yields a projection objective of the form~\eqref{eq:aug-inf2}, where the $L_{\boldsymbol{\psi}}$ terms come from a Lagrangian relaxation of regularity constraints, and ${\boldsymbol{\psi}}$ corresponds to dual variables. Originally, PR employed linear constraints on marginals, but \citet{he2013conll} extend the framework to arbitrary convex differentiable functions. Similarly, in LFM such an inference problem arises because we perform posterior inference assuming that the observations $\by$ have been corrupted under some noise model.~\citet{tarlow2012structured} also present a method for learning with certain forms of non-local losses in a max-margin framework. 

Our goals are very different than the above learning methods. We do not impose non-local terms $L_{\boldsymbol{\psi}}$ in order to regularize our learning process or allow it to cope with minimal annotation. Instead, we use $L_{\boldsymbol{\psi}}$ to increase the expressivity of our model, performing inference for every test example, using a different ${\boldsymbol{\psi}}$, since it depends on input features.  Since we are effectively `learning the regularizer,' on fully-labeled data, our learning approach in Section~\ref{sec:learning} differs from these methods. Finally, unlike these frameworks, we employ non-convex $L_{\boldsymbol{\psi}}$ terms in some of our experiments. The algorithmic consequences of non-convexity are discussed in Section~\ref{sec:non-convex}.

\section{OPTIMIZING THE NON-LOCAL MARGINAL INFERENCE OBJECTIVE}
\label{sec:optimization}

We now present an approach to solving~\eqref{eq:aug-inf2} using non-Euclidean projected gradient methods, which require access to a procedure for marginal inference in the base distribution (which we term the \emph{marginal oracle}), as well as access to the gradient of the energy function $L_{\boldsymbol{\psi}}$. We pose these algorithms in the \emph{composite minimization} framework, which gives us access to a wide variety of algorithms that are discussed in the supplementary material.

\subsection{CONVEX OPTIMIZATION BACKGROUND}

Before presenting our algorithms, we review several definitions from convex analysis \citep{rockafellar1997convex}.

We call a function $\varphi$ \emph{$\sigma$-strongly convex} with respect to a norm $\|\cdot\|_P$, if for all $x,y \in \text{dom}(\varphi)$,
\begin{align*}
\varphi(y) \ge \varphi(x) + \nabla \varphi(x)^T(y - x) + \frac{\sigma}{2} \|y - x\|^2_P.
\end{align*}

\begin{proposition}[e.g. \citet{beck2003mirror}]
The negative entropy function $-H(x)=\sum_i x_i \log x_i$ is 1-strongly convex with respect to the 1-norm $\|\cdot\|_1$ over the interior of the simplex $\Delta$ (restricting $\text{dom}(H)$ to $\textbf{int}(\Delta)$).
\end{proposition}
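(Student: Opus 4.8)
The plan is to show that the Hessian-based (or gradient-based) strong convexity inequality follows from a pointwise bound combined with the Cauchy--Schwarz / Pinsker-type argument. Concretely, for $\varphi(x) = \sum_i x_i \log x_i$ restricted to $\mathbf{int}(\Delta)$, I would establish that for all $x, y \in \mathbf{int}(\Delta)$,
\begin{align*}
\varphi(y) - \varphi(x) - \nabla\varphi(x)^T(y-x) = \mathrm{KL}(y \,\|\, x) \ge \tfrac{1}{2}\|y-x\|_1^2,
\end{align*}
where the equality is a direct computation (the left-hand side expands to $\sum_i y_i \log \frac{y_i}{x_i}$, using $\sum_i y_i = \sum_i x_i = 1$ to cancel the linear terms), and the inequality is precisely Pinsker's inequality. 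So the proof reduces to proving Pinsker's inequality, or citing it.

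First I would do the bookkeeping: write $\nabla\varphi(x)_i = \log x_i + 1$, form $\nabla\varphi(x)^T(y-x) = \sum_i (\log x_i + 1)(y_i - x_i)$, and note that the $+1$ term contributes $\sum_i (y_i - x_i) = 0$ since both lie on the simplex. What remains is $\sum_i y_i \log y_i - \sum_i y_i \log x_i = \mathrm{KL}(y\|x)$, which is manifestly nonnegative, but I need the stronger quadratic lower bound. Second, I would prove Pinsker: the standard route is to first handle the two-point case (Bernoulli random variables, i.e.\ bounding $p\log\frac{p}{q} + (1-p)\log\frac{1-p}{1-q} \ge 2(p-q)^2$ by calculus in one variable), and then lift to the general case by a data-processing / partitioning argument — group the coordinates into the set where $y_i \ge x_i$ and its complement, apply the two-point bound to the coarsened distribution, and observe that $\|y-x\|_1$ equals the total-variation distance of the coarsened pair. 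Third, conclude that $\sigma = 1$ works and that this is tight (the constant cannot be improved, as the two-point case shows near $p=q=1/2$).

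The main obstacle is really just Pinsker's inequality itself — specifically the one-dimensional calculus lemma and the reduction from $n$ points to $2$ points; everything else is mechanical. Since the paper explicitly cites \citet{beck2003mirror} for this proposition, the cleanest presentation is to note the identity reducing the strong-convexity gap to $\mathrm{KL}(y\|x)$ and then invoke Pinsker (with a one-line sketch of its two-point reduction) rather than reproving it from scratch. I would also remark why the restriction to $\mathbf{int}(\Delta)$ matters: it guarantees $\nabla\varphi$ is well-defined (no $\log 0$), and strong convexity with respect to $\|\cdot\|_1$ on the simplex is exactly the property needed to make the Bethe-entropy distance-generating function legitimate for the mirror-descent analysis used later in Section~\ref{sec:optimization}.
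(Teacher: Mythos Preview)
Your proposal is correct: the Bregman gap of negative entropy on the simplex is exactly $\mathrm{KL}(y\|x)$, and Pinsker's inequality then gives the $\tfrac{1}{2}\|y-x\|_1^2$ lower bound, which is precisely 1-strong convexity in $\|\cdot\|_1$. The paper, however, does not prove this proposition at all --- it is stated as a known fact with a citation to \citet{beck2003mirror} and no argument is given. So your write-up goes well beyond what the paper does; if you want to match the paper you can simply cite the result, but the proof you outline is the standard one and is sound.
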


Given a smooth and strongly convex function $\varphi$, we can also define an associated generalized (asymmetric) distance measure called the \emph{Bregman divergence} \citep{bregman1967relaxation} generated by $\varphi$,
\begin{align*}
B_\varphi(x, x_0) = \varphi(x) - \varphi(x_0) - \dt{\nabla \varphi(x_0)}{x - x_0}.
\end{align*}
For example, the KL divergence is the Bregman divergence associated to the negative entropy function, and the squared Euclidean distance is its own associated divergence. 

\emph{Composite minimization} \citep{Passty1979383} is a family of techniques for minimizing functions of the form $h = f + R$, where we have an oracle that allows us to compute minimizations over $R$ in closed form (usually $R$ here takes the form of a regularizer). Problems of this form are often solved with an algorithm called \emph{proximal gradient}, which minimizes $h(x)$ over some convex set $X$ using:
\begin{align*}
x_{t+1} = \argmin_{x \in X} ~\dt{\nabla f(x_t)}{x} + \frac{1}{2\eta_t}\|x - x_t\|_2^2 + R(x),
\end{align*}
for some decreasing sequence of learning rates $\eta_t$. Note that because of the requirement $x \in X$, proximal gradient generalizes projected gradient descent -- since unconstrained minimization might take us out of the feasible region $X$, computing the update requires projecting onto $X$.

But there is no reason to use the squared Euclidean distance when computing our updates and performing the projection. In fact, the squared term can be replaced by any Bregman divergence. This family of algorithms includes the \emph{mirror descent} and \emph{dual averaging} algorithms \citep{beck2003mirror,nesterov2009primal}.

\begin{algorithm}[tb]
   \caption{Bethe-RDA}
   \label{alg:b-rda}
\begin{algorithmic}
   \STATE {\bfseries Input:} parameters ${\boldsymbol{\theta}}$, energy function $L_{\boldsymbol{\psi}}(\mu)$
   \STATE set ${\boldsymbol{\theta}}_0 = {\boldsymbol{\theta}}$
   \STATE set $\mu_0$ to prox-center $\text{MARGINAL-ORACLE}({\boldsymbol{\theta}}_0)$
   \STATE ${\bar g_0} = 0$
   \REPEAT
   \STATE $\beta_t = \text{constant} \ge 0$
   \STATE ${\bar g_t} = \frac{t-1}{t}{\bar g_{t-1}} + \frac{1}{t}\nabla L(\mu_t)$
   \STATE ${\boldsymbol{\theta}}_t = {\boldsymbol{\theta}} - \frac{t}{t + \beta_t}{\bar g_t}$
   \STATE $\mu_t = \text{MARGINAL-ORACLE}({\boldsymbol{\theta}}_t)$
   \UNTIL{$\text{CONVERGED}(\mu_t, \mu_{t-1})$}
\end{algorithmic}

\end{algorithm}

We base our projected inference algorithms on \emph{regularized dual averaging} (RDA) \citep{xiao2010dual}. The updates are:
\begin{align}
\label{eq:rda}
x_{t+1} = \argmin_{x \in X} ~\dt{{\bar g_t}}{x} + \frac{\beta_t}{t}\varphi(x) + R(x),
\end{align}
where ${\bar g_t} = \frac{1}{t} \sum_k^t\nabla f(x_k)$ is the average gradient of $f$ encountered so far. One benefit of RDA is that it does not require the use of a learning rate parameter ($\beta_t = 0$) when using a strongly convex regularizer. RDA can be interpreted as doing a projection onto $X$ using the Bregman divergence generated by the strongly convex function $\varphi + R$.

\subsection{OUR ALGORITHM}

These non-Euclidean proximal algorithms are especially helpful when we are unable to compute a projection in terms of Euclidean distance, but can do so using a different Bregman divergence. We will show that this is exactly the case for our problem of projected inference: the marginal oracle allows us to project in terms of KL divergence.

However, to maintain tractability we avoid using the entropy function $H$ on the exponentially-large simplex $\Delta$, and instead optimize over the structured, factorized marginal polytope $\mathcal{M}$ and its corresponding structured Bethe entropy $\hb$. For tree-structured models, $H$ and $\hb$ have identical values, but different inputs. It remains to show the strong convexity of $-\hb$ so we can use it in RDA.

\begin{proposition}
For trees with $n$ nodes, the negative Bethe entropy function $-\hb$ is $\frac{1}{2}(2n - 1)^{-2}$-strongly convex with respect to the 2-norm over the interior of the marginal polytope $\mathcal{M}$.
\end{proposition}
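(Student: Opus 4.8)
The plan is to reduce the claim, via the variational description of the Bethe entropy on a tree, to the known $1$-strong convexity of the ordinary negative entropy on the simplex. On a tree the excerpt already tells us $\hb(\bmu)=H(q(\bmu))$, where $q(\bmu)$ is the unique tree-structured MRF with marginals $\bmu$; and by the maximum-entropy characterization of that MRF, $q(\bmu)$ in fact maximizes $H(q)$ over all joint distributions $q$ on $\by$ with those node- and edge-marginals. Writing $\Delta$ for the (exponentially large) joint simplex and $M$ for the \emph{linear} marginalization operator $q\mapsto(\text{node marginals},\text{edge marginals})$, this says
\[
-\hb(\bmu)\;=\;\min\{\,-H(q)\;:\;q\in\Delta,\ Mq=\bmu\,\},
\]
so $-\hb$ is a partial minimization ("infimal projection") of the strongly convex function $-H$ along the linear map $M$. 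The strong convexity of such a partial minimization is governed by the operator norm of $M$, which for the $0/1$ marginalization matrix is easy to compute.

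Carrying this out: fix $\bmu_1,\bmu_2\in\mathbf{int}(\mathcal{M})$ and choose minimizers $q_1,q_2\in\Delta$ with $Mq_i=\bmu_i$ (they exist, since $\{q\in\Delta:Mq=\bmu_i\}$ is compact and $-H$ continuous; in fact $q_i=q(\bmu_i)$, so $-H(q_i)=-\hb(\bmu_i)$). By linearity of $M$, $\tfrac12(q_1+q_2)$ is feasible for $\tfrac12(\bmu_1+\bmu_2)$, hence $-\hb\big(\tfrac12(\bmu_1+\bmu_2)\big)\le -H\big(\tfrac12(q_1+q_2)\big)$. Now apply the midpoint form of $1$-strong convexity of $-H$ with respect to $\|\cdot\|_1$ (the Proposition above; equivalently Pinsker's inequality), extended to the closed simplex by continuity: $-H\big(\tfrac12(q_1+q_2)\big)\le\tfrac12\big(-H(q_1)-H(q_2)\big)-\tfrac18\|q_1-q_2\|_1^2$. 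Finally bound $\|M\|_{1\to1}$: each column of the marginalization matrix is indexed by a joint configuration $\by$ and contains exactly one $1$ per node ($n$ of them) and one $1$ per tree edge ($n-1$ of them), so every column sums to $2n-1$, whence $\|\bmu_1-\bmu_2\|_2\le\|\bmu_1-\bmu_2\|_1=\|M(q_1-q_2)\|_1\le(2n-1)\|q_1-q_2\|_1$. Chaining these gives $-\hb\big(\tfrac12(\bmu_1+\bmu_2)\big)\le\tfrac12\big(-\hb(\bmu_1)-\hb(\bmu_2)\big)-\tfrac{1}{8(2n-1)^2}\|\bmu_1-\bmu_2\|_2^2$, i.e. midpoint $(2n-1)^{-2}$-strong convexity, a fortiori the asserted $\tfrac12(2n-1)^{-2}$. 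Since $-\hb$ is continuous (indeed differentiable) on $\mathbf{int}(\mathcal{M})$, this midpoint inequality upgrades to the gradient form of strong convexity with the same modulus via the parallelogram identity.

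The step I expect to be the main obstacle — really the only nontrivial point — is pinning down the variational identity and making sure everything is uniform over the whole interior of $\mathcal{M}$. The identity itself is standard exponential-family/maximum-entropy duality on a tree, but it must be phrased so that, as $\bmu$ approaches $\partial\mathcal{M}$, nothing degrades: the minimizing joint $q(\bmu)$ also approaches $\partial\Delta$, yet the midpoint inequality for $-H$ holds on the \emph{closed} simplex, so the modulus $(2n-1)^{-2}$ is genuinely uniform. This is precisely what makes the infimal-projection route preferable to a direct Hessian bound: the Hessian of $-\hb$ contains negative contributions $-(d_i-1)\sum_a\delta_i(a)^2/\mu_i(a)$ from the over-counted node entropies, which blow up near the boundary and obstruct a clean eigenvalue estimate. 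A final, routine check is the midpoint-to-strong-convexity upgrade and the constant it carries; one could even replace $\|M\|_{1\to1}=2n-1$ by $\|M\|_{\ell_1\to\ell_2}=\sqrt{2n-1}$ in the last step to sharpen the modulus, so the stated bound leaves some harmless slack.
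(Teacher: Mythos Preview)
Your argument is correct and in fact yields the sharper modulus $(2n-1)^{-2}$, of which the stated $\tfrac12(2n-1)^{-2}$ is an immediate consequence. The key steps---the variational identity $-\hb(\bmu)=\min\{-H(q):Mq=\bmu\}$ on trees, the column-sum bound $\|M\|_{1\to1}=2n-1$, the midpoint form of Pinsker, and the parallelogram-law upgrade from midpoint to full $\ell_2$ strong convexity---all check out.

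The paper, however, does not give a proof at all: it simply invokes Lemma~1 of \citet{fu2013bethe} as a black box. So your route is genuinely different in that it is self-contained. What it buys is transparency about \emph{where} the constant $(2n-1)^{-2}$ comes from (the column weight of the marginalization map) and why the bound is uniform on $\mathrm{int}(\mathcal{M})$ despite the Hessian of $-\hb$ being unbounded there---your infimal-projection argument sidesteps that pathology entirely, whereas a direct Hessian computation (which is closer in spirit to what Fu et al.\ do) has to wrestle with the negative over-counting terms. The paper's approach buys brevity and a pointer to the literature; yours buys an explicit, norm-aware derivation that also explains the slack in the stated constant (and, as you note, could be tightened further via $\|M\|_{1\to2}=\sqrt{2n-1}$).
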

\begin{proof}
\renewcommand{\qedsymbol}{}
Consequence of Lemma 1 in \citet{fu2013bethe}.
\vspace{-15pt}
\end{proof}

With these definitions in hand, we present Bethe-RDA projected inference Algorithm \ref{alg:b-rda}. This algorithm corresponds to instantiating \eqref{eq:rda} with $R=-\hb - \dt{{\boldsymbol{\theta}}}{\mu}$ and $\varphi=-\hb$. Note the simplicity of the algorithm when choosing $\beta_t = 0$. It is intuitively appealing that the algorithm amounts to no more than calling our marginal inference oracle with iteratively modified parameters. 

\begin{proposition}
For convex energy functions and convex $-\hb$, the sequence of primal averages of Algorithm \ref{alg:b-rda} converges to the optimum of the variational objective \eqref{eq:aug-inf2} with suboptimality of $O(\frac{\text{ln}(t)}{t})$ at time $t$.
\end{proposition}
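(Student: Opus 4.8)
The plan is to recognize Algorithm~\ref{alg:b-rda} with $\beta_t = 0$ as an instance of regularized dual averaging applied to the composite problem $\min_{\bmu \in \Marg} h(\bmu)$, $h \defeq f + R$ with $f \defeq L_{\boldsymbol{\psi}}$ and $R \defeq -\hb - \dt{{\boldsymbol{\theta}}}{\bmu}$, and then to invoke the known $O(\ln t / t)$ guarantee for RDA in the regime where the composite regularizer $R$ is itself strongly convex \citep{xiao2010dual,nesterov2009primal}. First I would check that the update~\eqref{eq:rda} with $\varphi = -\hb$, $R = -\hb - \dt{{\boldsymbol{\theta}}}{\bmu}$, and $\beta_t = 0$ reduces to $\bmu_{t+1} = \argmin_{\bmu \in \Marg} \dt{\bar g_t - {\boldsymbol{\theta}}}{\bmu} - \hb(\bmu)$, which is exactly a call to the marginal oracle with parameter ${\boldsymbol{\theta}}_t = {\boldsymbol{\theta}} - \bar g_t$, i.e. one line of Algorithm~\ref{alg:b-rda}; this makes the identification precise and shows that no learning-rate tuning is needed.

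Next I would verify the two hypotheses of the RDA convergence theorem. (i) $R$ is closed and $\sigma$-strongly convex with respect to $\|\cdot\|_2$ on $\textbf{int}(\Marg)$: the linear term $-\dt{{\boldsymbol{\theta}}}{\bmu}$ does not affect the modulus, so this is immediate from the preceding proposition with $\sigma = \tfrac12(2n-1)^{-2}$. (ii) The subgradients of $f = L_{\boldsymbol{\psi}}$ at the iterates are uniformly bounded in the dual ($2$-)norm by some $G < \infty$: since $\Marg$ is a compact polytope and $L_{\boldsymbol{\psi}}$ is convex and finite on $\Marg$ (e.g. of the form~\eqref{eq:ell1}--\eqref{eq:ell2} with each $\tilde{\ell}_j$ finite on the bounded range $\{a_j^\top\bmu : \bmu \in \Marg\}$), $L_{\boldsymbol{\psi}}$ is Lipschitz on $\Marg$; moreover the marginal oracle returns strictly positive marginals for finite ${\boldsymbol{\theta}}_t$, so every iterate $\bmu_t$ lies in $\textbf{int}(\Marg)$ where $\nabla L_{\boldsymbol{\psi}}$ exists, giving $G \defeq \sup_{\bmu \in \Marg} \|\nabla L_{\boldsymbol{\psi}}(\bmu)\|_2 < \infty$.

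With these in place, the standard RDA argument applies: bound the per-step gap $\dt{\nabla f(\bmu_k)}{\bmu_k - \bmu^*} + R(\bmu_k) - R(\bmu^*)$ by a telescoping estimate that uses the $\sigma$-strong convexity of $\tfrac{\beta_t}{t}\varphi + R = R$, sum over $k = 1,\dots,t$, and apply convexity of $f$ with Jensen's inequality to pass to the primal average $\bar\bmu_t = \tfrac1t\sum_{k=1}^t \bmu_k$, obtaining
\begin{align*}
h(\bar\bmu_t) - h(\bmu^*) \;\le\; \frac{G^2}{2\sigma}\cdot\frac{1 + \ln t}{t} \;=\; (2n-1)^2 G^2\,\frac{1 + \ln t}{t} \;=\; O\!\left(\frac{\ln t}{t}\right),
\end{align*}
where $h(\bmu^*)$ is the optimal value of~\eqref{eq:aug-inf2}. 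This is exactly the claimed rate.

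I expect the main obstacle to be the care needed at the boundary of $\Marg$: $-\hb$ is strongly convex and differentiable only on $\textbf{int}(\Marg)$, so one must confirm both that the iterates remain interior (true by positivity of the oracle marginals for finite parameters) and that the suboptimality bound remains valid against the true optimum of~\eqref{eq:aug-inf2}, even if that optimum sits on the boundary — this is handled by continuity of $h$ on the compact set $\Marg$ (the Bethe entropy extends continuously to all of $\Marg$) together with a routine limiting argument. The second, more mechanical, issue is pinning down the finite Lipschitz constant $G$ for $L_{\boldsymbol{\psi}}$; this is automatic for the parametric families used in the experiments and otherwise can be taken as a standing assumption.
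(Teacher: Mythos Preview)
Your proposal is correct and follows essentially the same route as the paper: the paper's entire proof is a one-line appeal to Theorem~3 of \citet{xiao2010dual} together with the strong convexity of $-\hb$ established in the preceding proposition, and you do exactly this, but with the hypothesis verification (the identification $f = L_{\boldsymbol{\psi}}$, $R = -\hb - \dt{{\boldsymbol{\theta}}}{\bmu}$, $\varphi = -\hb$; bounded subgradients; interior iterates) spelled out. Your additional care about boundary behavior and the finiteness of $G$ is more than the paper provides, but not a different argument.
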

\begin{proof}
This follows from Theorem 3 of \cite{xiao2010dual} along with the strong convexity of $-\hb$.
\end{proof}

If we have more structure in the energy functions, specifically a Lipschitz-continuous gradient, we can modify the algorithm to use Nesterov's acceleration technique and achieve a convergence of $O(\frac{1}{t^2})$. Details can be found in Appendix \ref{sec:supp-acc-rda}. Additionally, in practice these problems need not be solved to optimality and give stable results after a few iterations, as demonstrated in Figure \ref{fig:acc-vs-iters}.

\subsection{INFERENCE WITH NON-CONVEX, NON-LOCAL ENERGIES}
\label{sec:non-convex}

An analogy can be made here to loopy belief propagation -- even in the case of non-convex loss functions (and even non-convex entropy functions with associated inexact marginal oracles), the updates of our inference (and learning) algorithms are well-defined. Importantly, since one of our motivations for developing non-local inference was to generalize mean field inference, and the additional penalty terms are non-convex in that case, we would like our algorithms to work for the non-convex case as well.

Unlike loopy belief propagation, however, since we derive our algorithms in the framework of composition minimization, we have access to a wealth of theoretical guarantees. Based on results from the theory of optimization with first-order surrogate loss functions \citep{mairal2013optimization}, in Appendix \ref{sec:supp-non-convex} we propose a small modification to Algorithm \ref{alg:b-rda} with an asymptotic convergence condition even for non-convex energies. In practice we find that the unmodified Algorithm \ref{alg:b-rda} also works well for these problems, and experimentally in Section \ref{ocr-experiments}, we see good performance in both inference and learning with non-convex energy functions.

\section{LEARNING MODELS WITH NON-LOCAL ENERGIES}
\label{sec:learning}

\begin{algorithm}[tb]
   \caption{Learning with non-local energies}
   \label{alg:learning}
\begin{algorithmic}
   \STATE {\bfseries Input:} examples ${\bx_i, \by_i}$ and inference oracle $\text{MARG}()$
\STATE  for distributions with the clique structure of $P_{{\boldsymbol{\theta}}}(\by|\bx)$. 
 \STATE {\bfseries Output:} parameters $({\boldsymbol{\theta}},{\boldsymbol{\psi}})$ for $P_c(\by | \bx)$. 
\REPEAT{
\STATE //E-Step
\FORALL{$(\bx_i,\by_i)$}
\STATE $\bmu_i \leftarrow $ (Algorithm~\ref{alg:b-rda}) // using ${\boldsymbol{\theta}},{\boldsymbol{\psi}}$ and MARG()
\STATE $\rho_i \leftarrow $ (Proposition~\ref{prop:1}) // using ${\boldsymbol{\psi}},\bmu_i$
\STATE // note $Q_i(\by_i)$ is a CRF with potentials ${\boldsymbol{\theta}} + \rho_i$. 
\ENDFOR
\STATE //M-Step (gradient-based learning of CRF parameters)
\REPEAT{
\STATE $m_i \leftarrow \text{MARG}(Q_i) \; \forall j$ //standard CRF inference
\STATE $\nabla_{\boldsymbol{\theta}} \leftarrow \sum_i S(\by_i) - m_i$
\STATE $\nabla_{\boldsymbol{\psi}} \leftarrow \sum_i \frac{d \rho_i}{d {\boldsymbol{\psi}}}^\top \left(S(\by_i) - m_i\right)$
\STATE ${\boldsymbol{\theta}} \leftarrow$ Gradient-Step(${\boldsymbol{\theta}}, \nabla_{\boldsymbol{\theta}}$)
\STATE ${\boldsymbol{\psi}} \leftarrow$ Gradient-Step(${\boldsymbol{\psi}}, \nabla_{\boldsymbol{\psi}}$)
}
\UNTIL{converged}
}
\UNTIL{converged OR iter $>$ max\_iters}

\end{algorithmic}
\end{algorithm}

\begin{algorithm}[tb]
   \caption{Doubly-stochastic learning with $L_{\boldsymbol{\psi}}$ given by a sum of scalar functions of linear measurements~\eqref{eq:ell2}. }
   \label{alg:learning2}
\begin{algorithmic}
   \STATE {\bfseries Input:} examples ${\bx_i, \by_i}$ and $\text{MARGINAL-ORACLE}()$
\STATE  for distributions with the clique structure of $P_{{\boldsymbol{\theta}}}(\by|\bx)$. 
 \STATE {\bfseries Output:} parameters $({\boldsymbol{\theta}},{\boldsymbol{\psi}})$ for $P_c(\by | \bx)$. 
\REPEAT{
\STATE sample $(\bx_i,\by_i)$ randomly
\STATE $\bmu_i \leftarrow $ (Algorithm~\ref{alg:b-rda}) 
\STATE $\nabla_{\boldsymbol{\theta}} \leftarrow  S(\by_i) - \bmu_i$
\STATE $\nabla_{{\boldsymbol{\psi}}_j} \leftarrow  \nabla \ell_j (\bmu_i) a_j^\top \left( S(\by_i) - \bmu_i\right)$
\STATE ${\boldsymbol{\theta}} \leftarrow$ Gradient-Step(${\boldsymbol{\theta}}, \nabla_{\boldsymbol{\theta}}$)
\STATE ${\boldsymbol{\psi}} \leftarrow$ Gradient-Step(${\boldsymbol{\psi}}, \nabla_{\boldsymbol{\psi}}$)

}
\UNTIL{converged OR iter $>$ max\_iters}

\end{algorithmic}
\end{algorithm}

We seek to learn the parameters ${\boldsymbol{\theta}}$ and ${\boldsymbol{\psi}}$ of the underlying CRF base model and $L_{\boldsymbol{\psi}}$, respectively. Let $S = \{\by_i,\bx_i\}$ be $n$ training examples.  Let $Q( \by_i ; \bmu_i)$  be the variational distribution for $\by_i$  resulting from applying Proposition~\ref{prop:var}. Namely, $Q( \by_i ; \bmu_i)$  is an MRF with parameters 
\begin{equation}
\boldsymbol{\rho_i} \defeq \btheta - \nabla_{\bmu } L_{\boldsymbol{\psi}}(\bmu_i). \label{eq:q1main}
\end{equation}
We employ the notation $Q( \by_i ; \bmu_i)$ to highlight the role of $\bmu_i$: for  a given $(\by_i,\bx_i)$ pair, the family of variational distributions over $\by_i$ is indexed by possible values of $\bmu_i$ (recall we suppress the explicit dependence of $\btheta$ and ${\boldsymbol{\psi}}$ on $\bx$). Finally, define the shorthand $M = \{\mu_1, \ldots, \mu_n\}$. 

${\boldsymbol{\psi}}$ interacts with the data in a complex manner that prevents us from using standard learning techniques for the exponential family. Namely, we can not easily differentiate a likelihood with respect to ${\boldsymbol{\psi}}$, since this requires differentiating the output $\bmu$ of a convex optimization procedure, and the extra $L_{\boldsymbol{\psi}}$ term in~\eqref{eq:aug-inf2} prevents the use of conjugate duality relationships available for the exponential family. We could have used automatic methods to differentiate the iterative inference procedure~\citep{stoyanov2011empirical,domke2012generic}, but found our learning algorithm works well.

We employ a variational learning algorithm, presented in Algorithm~\ref{alg:learning}, alternately updating the parameters $M$ of our tractable CRF-structured variational distributions, and updating the parameters $({\boldsymbol{\theta}},{\boldsymbol{\psi}})$ assuming the following surrogate likelihood given by these CRF approximations:
\begin{align}
L({\boldsymbol{\theta}},{\boldsymbol{\psi}};M) &=  \sum_i \log Q(y_i;\bmu_i). \label{eq:surrogate}
\end{align}
Given ${\boldsymbol{\theta}}$ and ${\boldsymbol{\psi}}$, we update $M$  using Algorithm~\ref{alg:b-rda}. Given $M$, we update ${\boldsymbol{\theta}}$ and ${\boldsymbol{\psi}}$ by taking a single step in the direction of the gradient of the surrogate likelihood~\eqref{eq:surrogate}. We avoid taking more than one gradient step, since the gradients for ${\boldsymbol{\theta}}$ and ${\boldsymbol{\psi}}$ depend on $M$ and an update to ${\boldsymbol{\theta}}$ and ${\boldsymbol{\psi}}$ will break the property that $\bmu\left(Q(\by ; \bmu_i)\right) = \bmu_i$. Therefore, we recompute $\bmu_i$ every time we update the parameters. 

Overall, it remains to show how to compute gradients of~\eqref{eq:surrogate}. For ${\boldsymbol{\theta}}$, we have the standard CRF likelihood gradient~\citep{sutton2006introduction}:
\begin{align}
\nabla_{\boldsymbol{\theta}} L({\boldsymbol{\theta}},{\boldsymbol{\psi}};M)  = \sum_i S(\by_i) - \bmu_i ~\label{eq:crf-ll}. 
\end{align}
For ${\boldsymbol{\psi}}$, we have:
\begin{align}
\nabla_{\boldsymbol{\psi}} L({\boldsymbol{\theta}},{\boldsymbol{\psi}};M)  = \sum_i \frac{d \boldsymbol{\rho_i}}{d {\boldsymbol{\psi}}} \frac{d}{d \boldsymbol{\rho_i}}  \log  Q(\by_i;\bmu_i) . 
\end{align}
From~\eqref{eq:q1main}, $\frac{d}{d \boldsymbol{\rho_i}}  \log  Q(\by_i;\bmu_i) $ is also $S(\by_i) - \bmu_i$and


\begin{align}
\frac{d \boldsymbol{\rho_i}}{d {\boldsymbol{\psi}}} = \frac{d}{d {\boldsymbol{\psi}}} \frac{d}{d \bmu}  L_{\boldsymbol{\psi}}(\bmu) \label{eq:D-L}
\end{align}

Clearly, this depends on the structure of $L_{\boldsymbol{\psi}}$.  Consider the parametrization~\eqref{eq:ell1}. 
With this, we have:
\begin{align}
\frac{\partial}{\partial {\boldsymbol{\psi}}_j} \frac{d}{d \bmu}  L_{\boldsymbol{\psi}}(\bmu) =  \nabla \ell_j(\bmu) \frac{d }{ d \bmu}\ell_j(\bmu)
\end{align}
Therefore, we have $\frac{\partial}{\partial {\boldsymbol{\psi}}_j} \log  Q(\by_i;\bmu_i) =   \nabla \ell_j(\bmu) \frac{d }{ d \bmu} \ell_j(\bmu)^\top \left(S(\by) - \bmu_i \right)$.  For linear measurements~\eqref{eq:ell2}, this amounts to
\begin{align}
\nabla \ell(\bmu)  \left(a_j^\top S(\by) - a_j^\top \bmu_i \right). \label{eq:lm}
\end{align}
This has a simple interpretation: the gradient with respect to ${\boldsymbol{\psi}}_j$ equals the gradient of the scalar loss $\ell_j$ at the current marginals $\bmu_j$ times the difference in linear measurements between the ground truth labels and the inferred marginals. 

Algorithm~\ref{alg:learning} has an expensive double-loop structure. In practice it is sufficient to employ a `doubly-stochastic' version given in Algorithm~\ref{alg:learning2}, where we sample a training example $(\bx_i, \by_i)$ and use this to only perform a single gradient step on ${\boldsymbol{\theta}}$ and ${\boldsymbol{\psi}}$. To demonstrate the simplicity of implementing our learning algorithm, we avoid any abstract derivative notation in Algorithm~\ref{alg:learning2} by specializing it to the case of~\eqref{eq:lm}. In our experiments, however, we sometimes do not use linear measurements. Overall, all our experiments use the fast doubly-stochastic approach of Algorithm~\ref{alg:learning2} solely, since it performs well. In general, our learning algorithms are not guaranteed to converge because we approximate the complex interaction between ${\boldsymbol{\psi}}$ and $\bmu$ with alternating updates. In practice, however, terminating after a fixed number of iterations yields models that generalize well. 

Finally, recall that the notation $L_{\boldsymbol{\psi}}(\bmu_i)$ suppresses the potential dependence of ${\boldsymbol{\psi}}$ on $\bx_i$. We assume each ${\boldsymbol{\psi}}_j$ is a differentiable function of features of $\bx_i$. Therefore, in our experiments where ${\boldsymbol{\psi}}$ depends on $\bx_i$, we perform gradient updates for the parametrization of ${\boldsymbol{\psi}}(\bx)$ via further application of the chain rule. 

\section{EXPERIMENTS}

\subsection{CITATION EXTRACTION}
\label{citation-experiments}

\begin{table}[ht]
\begin{center}
\begin{tabular}{ |c | c | c | c|}
\hline
Model & F1\\
\hline
Our Baseline & 94.47\\
Non-local Energies & 95.47\\
\hline
Baseline \citep{anzaroot2014learning} & 94.41\\
Soft-DD \citep{anzaroot2014learning} & 95.39\\
\hline
\end{tabular}
\caption{Comparison of F1 scores on Citation Extraction dataset. We compare MAP inference F1 scores of our non-local energy model and the specialized dual decomposition model of \citet{anzaroot2014learning}. Both variants learn global regularities that significantly improve performance. \label{tab:citation-results}}
\end{center}
\end{table}

\begin{figure}[ht]
\label{fig:acc-vs-iters}
\centering
\includegraphics[width=0.30\textwidth]{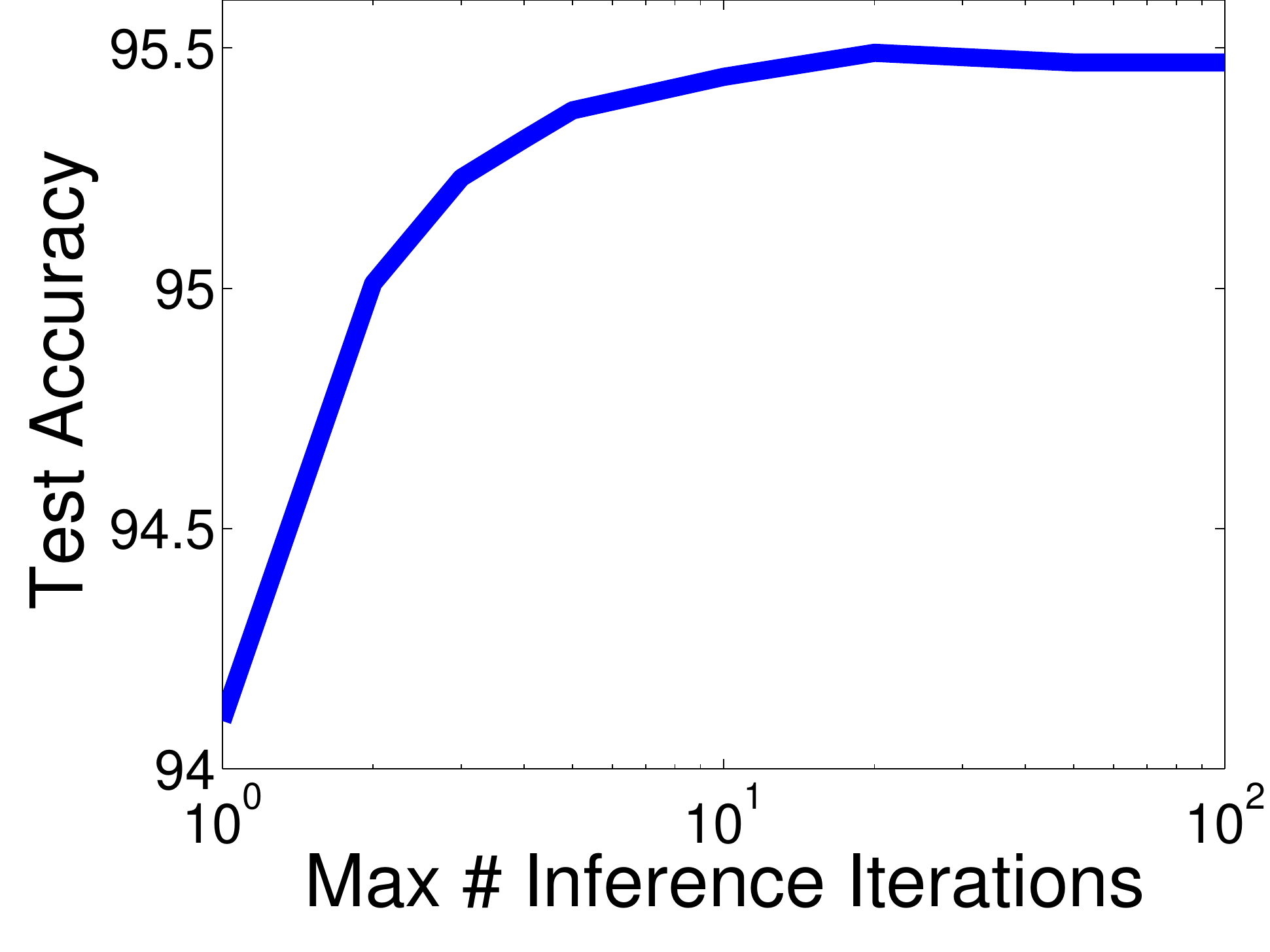}
\caption{Citation extraction F1 when limiting maximum number of test-time inference iterations. Most of our accuracy gain is captured within the first 5-10 iterations.}
\label{fig:cit}
\end{figure}

We first apply our algorithm to the NLP task of performing text field segmentation on the UMass citation dataset \citep{anzaroot2013new}, which contains 
strings of citations from research papers, segmented into fields (author, title, etc.). Our modeling approach, closely follows~\citet{anzaroot2014learning}, who extract segmentations using a linear-chain segmentation model, to which they add a large set of `soft' linear global regularity constraints. 

 Let $\by$ be a candidate labeling. Imagine, for example, that we constrain predicted segmentations to have no more predicted last names than first names. Then, the numbers of first and last names can be computed by linear measurements $a_{\text{first}}^\top S(\by)$ and $a_{\text{last}}^\top S(\by)$, respectively. A hard constraint on $\by$ would enforce $a_{\text{first}}^\top S(\by) - a_{\text{last}}^\top S(\by) = 0$.  This is relaxed in~\citet{anzaroot2014learning} to a penalty term 
\begin{align}
c \ell_h\left(a_{\text{first}}^\top S(\by) - a_{\text{last}}^\top S(\by)\right)
\end{align}
that is added to the MAP inference objective, where $\ell_h(x) = \max(1-x,0)$ is  a hinge function. For multiple soft constraints, the overall prediction problem is
\begin{align}
\argmin_\by \dt{-{\boldsymbol{\theta}}}{S(\by)} + \sum_j c_j \ell_h\left(a_j^\top S(\by) \right), \label{eq:anz}
\end{align}
where ${\boldsymbol{\theta}}$ are the parameters of the underlying linear-chain model. They use a dual decomposition style algorithm for solving~\eqref{eq:anz}, that crucially relies on the specific structure of the hinge terms $\ell_h$. They learn the $c_j$ for hundreds of `soft constraints' using a perceptron-style algorithm. 

We consider the same set of measurement vectors $a_j$, but impose non-local terms that act on~\textit{marginals} $\bmu$ rather than specific values $\by$. Further, we use \emph{smoothed} hinge functions, which improve the convergence rate of inference~\citep{rennie2005smooth}. We find the variational distribution by solving the marginal inference version of~\eqref{eq:anz}, an instance of our inference framework with linear measurements \eqref{eq:ell2}:
\begin{align}
\argmin_{\bmu} \dt{-{\boldsymbol{\theta}}}{\bmu}- \hb(\bmu) +  \sum_j c_j \ell_h\left(a_j^\top \bmu \right), \label{eq:us-citation}
\end{align}

As in ~\citet{anzaroot2014learning}, we first learn chain CRF parameters ${\boldsymbol{\theta}}$ on the training set. Then, we learn the $c_j$ parameters on the development set, using Algorithm~\ref{alg:learning2}, and tune hyperparameters for development set performance. At both train and test time, we ignore any terms in~\eqref{eq:us-citation} for which $c_j <0$. 

We present our results in Table \ref{tab:citation-results}, measuring segment-level F1. We can see that our baseline chain has slightly higher accuracy than the baseline approach of \citet{anzaroot2014learning}, possibly due to optimization differences. Our augmented model (Non-Local Energies) matches and very slightly beats their soft dual decomposition (Soft-DD) procedure. This is especially impressive because they employ a specialized linear-programming solver and learning algorithm adapted to the task of MAP inference under hinge-loss soft constraints, whereas we simply plug in our general learning and inference algorithms for non-local structured prediction -- applicable to any set of energy functions. 

Our comparable performance provides experimental evidence for our intuition that preferences about MAP configurations can be expressed (and ``relaxed'') as functions of expectations. \citet{anzaroot2014learning} solve a penalized MAP problem directly, while our prediction algorithm first finds a distribution satisfying these preferences, and then performs standard MAP inference in that distribution. 

Finally, in Figure~\ref{fig:cit} we present results demonstrating that our algorithm's high performance can be obtained using only 5-10 calls per test example to inference in the underlying chain model. In Section~\ref{sec:supp-cit}, we analyze the empirical convergence behavior of Algorithm~\ref{alg:b-rda}.

\subsection{HANDWRITING RECOGNITION}
\label{ocr-experiments}

\begin{table}[ht]
\begin{center}
\begin{tabular}{ | c | c | c | c | c | c | }
\hline
N-Grams& 2 & 3 & 4 & 5 & 6\\
\hline
Accuracy& 85.02 & 96.20 & 97.21 & 98.27 & 98.54 \\
\hline
\end{tabular}
\caption{Character-wise accuracy of Structured Prediction Cascades \citep{2012arXiv1208.3279W} on OCR dataset.\label{tab:ocr-spc-results}}
\end{center}
\end{table}

\begin{table}[ht]

\begin{center}
\begin{tabular}{ |c | c|}
\hline
SPC \citep{2012arXiv1208.3279W}  & Accuracy\\
\hline
2-gram & 85.02\\
3-gram& 96.20\\
4-gram & 97.21\\
5-gram & 98.27\\
6-gram & 98.54\\
\hline
\end{tabular}
\caption{Character-wise accuracy of our baselines, and models using learned non-local energies on Handwriting Recognition dataset. Note that word classifier baseline is also given in character-wise accuracy for comparison. \label{tab:ocr-global-results}}
\end{center}
\end{table}

\begin{table}[ht]

\begin{center}
\begin{tabular}{ |c | c|}
\hline
Model & Accuracy\\
\hline
2-gram (base model) & 84.93\\
$L_{\boldsymbol{\psi}}^u$ & 94.01\\
$L_{\boldsymbol{\psi}}^u$ (MM) & 94.96\\
$L_{\boldsymbol{\psi}}^w$ & 98.26\\
$L_{\boldsymbol{\psi}}^w$ (MM) & \bf98.83\\
\hline
55-Class Classifier (MM) & 86.06\\
\hline
\end{tabular}
\caption{Character-wise accuracy of our baselines, and models using learned non-local energies on Handwriting Recognition dataset. Note that word classifier baseline is also given in character-wise accuracy for comparison. \label{tab:ocr-global-results}}
\end{center}
\end{table}

We next apply our algorithms to the widely-used handwriting recognition dataset of \citet{koller2004max}. 
We follow the setup of \citet{2012arXiv1208.3279W}, splitting the data into 10 equally sized folds, using 9  for training and one to test. We report the cross-validation results across all 10 folds.

The \emph{structured prediction cascades} of \citet{2012arXiv1208.3279W} achieve high performance on this dataset by using extremely high order cliques of characters (up to 6-grams), for which they consider only a small number of candidate outputs. Their state-of-the-art results are reproduced in Table \ref{tab:ocr-spc-results}. The excellent performance of these large-clique models is consequence of the fact that the data contains only 55 unique words, written by 150 different people. Once the model has access to enough higher-order context, the problem becomes much easier to solve.

With this in mind, we design two non-convex, non-local energy functions. These energies are intended to regularize our predictions to lie close to known elements of the vocabulary. Our base model is a standard linear-chain CRF with image features on the nodes, and no features on the bigram edge potentials. Let $U(\mu) = \sum_n \mu_n $ be a function that takes the concatenated vector of node and edge marginals and sums up all of the node marginals, giving the global unigram expected sufficient statistics. Let $\{u_i\} = \{U(\mu(y_i))\}$ indicate the set of all such unique vectors when applying $U$ to the train set empirical sufficient statistics for each data case $y_i$. Simply, this gives 55 vectors $u_i$ of length 26 containing the unigram counts for each unique word in the train set.

Our intuition is that we would like to be able to ``nudge'' the results of inference in our chain model by pulling the inferred $U(\mu)$ to be close to one of these global statistics vectors. We add the following non-convex non-local energy function to the model:
\begin{align}
L^u_{\boldsymbol{\psi}}(\mu) = {\boldsymbol{\psi}} \min_{i} \|u_i - U(\mu)\|_1 .
\end{align}
We learn two variants of this model, which differently parametrize the dependence of ${\boldsymbol{\psi}}$ on $\bx$. The first has a single bias feature on the non-local energy. The second conditions on a global representation of the sequence: concretely, we approximate the RBF \emph{kernel mean map} (MM) \citep{smola2007hilbert} using random Fourier features (RFF) \citep{rahimi2007random}. This simply involves multiplying each image feature vector in the sequence by a random matrix with $\sim 1000$ rows, applying a pointwise non-linearity, and taking ${\boldsymbol{\psi}}$ to be a linear function of the average vector.

Results of these experiments can be seen in Table \ref{tab:ocr-global-results}. Adding the non-local energy brings our performance well above the baseline bigram chain model, and our training procedure is able to give substantially better performance when ${\boldsymbol{\psi}}$ depends on the above input features.

The energy $L^u_{\boldsymbol{\psi}}$, based on unigram sufficient statistics, is not able to capture the relative ordering of letters in the vocabulary words, which the structured prediction cascades models do capture. This motivates us to consider another energy function. Let $\{w_i\}=\{\mu_n(y_i)\}$ be the set of unique vectors of concatenated node marginal statistics for the train set. This gives 55 vectors of length $l_i * 26$, where $l_i$ is the length of the $i$th distinct train word. Next, we define a different energy function to add to our base chain model:
\begin{align}
L^w_{\boldsymbol{\psi}}(\mu) = {\boldsymbol{\psi}}\min_{i} \|w_i - \mu\|_1 .
\end{align}
Once again we implement featurized and non-featurized versions of this model. As noted in structured prediction cascades, giving the model access to this level of high-order structure in the data makes the inference problem extremely easy. Our model outperforms the best structured prediction cascades results, and we note again an improvement from using the featurized over the non-featurized ${\boldsymbol{\psi}}$.

Of course, since the dataset has only 55 actual labels, and some of those are not valid for different input sequences due to length mismatches, this is arguably a classification problem as much as a structured prediction problem. To address this, we create another baseline, which is a constrained 55-class logistic regression classifier (constrained to only allow choosing output classes with appropriate lengths given the input). We use our same global mean-map features from the $L_{\boldsymbol{\psi}}^*~(MM)$ variants of the structured model and report these results in Table \ref{tab:ocr-global-results}. We also tune the number of random Fourier features as a hyperparameter to give the classifier as much expressive power as possible. As we can see, the performance is still significantly below the best structured models, indicating that the interplay between local and global structure is important.

\subsection{COLLECTIVE GRAPHICAL MODELS}
\vspace{-5pt}

\begin{table}
\begin{center}
\begin{tabular}{ |c | c | c | c|}
\hline
$s$ & 625 & 10k & 50k\\
\hline
Our Method & 0.19 & 2.7 & 14\\
\hline
IP & 2.8 & 93 & 690\\
\hline
\end{tabular}
\caption{Comparison of runtime (in seconds, averaged over 10 trials) between the interior point solver (IP) of~\citet{sheldon2013approximate} v.s. Algorithm~\ref{alg:b-rda} on different CGM problem sizes $s$, the cardinality of the edge potentials in the underlying graphical model, where marginal inference is $O(s)$. 
\label{tab:cgm}
\vspace{-8pt}
}
\end{center}
\end{table}

Next, we demonstrate that that our proximal gradient-based inference framework dramatically speeds up approximate inference in \emph{collective graphical models} (CGMs) \citep{sheldon2011collective}. CGMs are a method for structured learning and inference with noisy aggregate observation data. The large-scale dependency structure is represented via a graphical model, but the nodes represent not just single variables, but aggregate sufficient statistics of large sets of underlying variables, corrupted by some noise model. In previous work, CGMs have been successfully applied to modeling bird migration. Here, the base model is a linear chain representing a time series of bird locations. Each observed variable corresponds to counts from bird watchers in different locations. These observations are assumed to be Poisson distributed with rate proportional to the true count of birds present. The CGM MAP task is to infer the underlying migration patterns.

\citet{sheldon2013approximate} demonstrate that MAP in CGMs is NP-hard, \emph{even for trees}, but that approximate MAP can be performed by solving a problem of the form~\eqref{eq:aug-inf2}:
\vspace{-5pt}
\begin{align}
\mu^* = \argmax_{\bmu} \dt{\btheta}{\bmu} + \hb(\bmu) + \sum_i^n P_i(\mu_i | {\boldsymbol{\psi}} \by_i ) \label{eq:cgm-inf}
\end{align}
where $P_i$ are (concave) Poisson log-likelihoods and each $\by_i$ is an observed bird count.

For the case where the underlying CGM graph is a tree, the `hard EM' learning algorithm of~\citet{sheldon2013approximate} is the same as Algorithm~\ref{alg:learning} specialized to their model. Therefore,~\citet{sheldon2013approximate} provide additional experimental evidence that our alternating surrogate-likelihood optimization works well in practice.

The learning procedure of~\citet{sheldon2013approximate} is very computationally expensive because they solve instances of~\eqref{eq:cgm-inf} using an interior-point solver in the inner loop. For the special case of trees, Algorithm ~\ref{alg:b-rda} is directly applicable to~\eqref{eq:cgm-inf}. Using synthetic data and code obtained from the authors, we compare their generic solver to Algorithm ~\ref{alg:b-rda} for solving instances of~\eqref{eq:cgm-inf}. In Table \ref{tab:cgm}, we see that our method achieves a large speed-up with no loss in solution accuracy (since it solves the same convex problem).
\section{DISCUSSION AND FUTURE WORK}

Our results show that our inference and learning framework allows for tractable modeling of non-local dependency structures, resistant to traditional probabilistic formulations. By approaching structured modeling not via independence assumptions, but as arbitrary penalty functions on the marginal vectors $\bmu$, we open many new modeling possibilities.  Additionally, our generic gradient-based inference method can achieve substantial speedups on pre-existing problems of interest. In future work, we will apply our framework to new problems and new domains. 

\subsubsection*{ACKNOWLEDGEMENTS}

This work was supported in part by the Center for Intelligent Information Retrieval, in part by DARPA under agreement number FA8750-13-2-0020, and in part by NSF grant \#CNS-0958392. The U.S. Government is authorized to reproduce and distribute reprints for Governmental purposes notwithstanding any copyright notation thereon. Any opinions, findings and conclusions or recommendations expressed in this material are those of the authors and do not necessarily reflect those of the sponsor.

\bibliographystyle{icml2015}

\bibliography{paper}

\newpage
\onecolumn
\appendix
\begin{center}
\Large{Supplementary Material}
\end{center}

\section{Variational Approximation}

\label{sec:var-inference}
During learning, reasoning about $P_c(\by | \bx)$  in~\eqref{eq:pc} is difficult, due to the intractability of $Z_{{\boldsymbol{\theta}},{\boldsymbol{\psi}}}$. In response, we approximate it with a variational distribution:
\begin{align}
Q(\by) = \argmin_{Q^\prime} F(Q^\prime; \bx, {\boldsymbol{\theta}}, {\boldsymbol{\psi}}),
\end{align}
where 
\begin{align}
F(Q^\prime) & = KL(Q^\prime(\by) || P_c(\by|\bx)) \nonumber\\
& = -H(Q^\prime) -  \Ex_{Q^\prime} [ \dt{{\boldsymbol{\theta}}}{S(\by)} ] + \Ex_{Q^\prime} [  L_{\boldsymbol{\psi}}(S(\by))]  \label{eq:aug-inf-0}\\
& \approx  -H(Q^\prime) - \dt{{\boldsymbol{\theta}}}{\bmu(Q^\prime)} +  L_{\boldsymbol{\psi}}(\bmu(Q^\prime)).  \label{eq:aug-inf}
\end{align}

Given $\bx$, ${\boldsymbol{\theta}}$, and ${\boldsymbol{\psi}}$, we select $Q$ by minimizing the approximation~\eqref{eq:aug-inf}. Note that the surrogate we minimize is a \textit{lower} bound to~\eqref{eq:aug-inf-0}, as  $\Ex_{Q^\prime} [  L_{\boldsymbol{\psi}}(S(\by))] \geq L_{\boldsymbol{\psi}}(\bmu(Q^\prime))$, by Jensen's inequality and the convexity of $L$. This differs from many mean-field variational inference approaches that minimize an upper bound. 

So far, we have not assumed any structure on $Q$. Next, we show that the minimizer of~\eqref{eq:aug-inf} is  a MRF with the same clique structure as $P_{\boldsymbol{\theta}}$. This provides an alternative derivation of the techniques in Section~\ref{sec:var-and-map}. 

Let $q_\by$ denote the probability under $Q$ of a given joint configuration $\by$. There are exponentially-many such $q_\by$, and $H(Q)$ is the entropy on the simplex $-\sum_\by q_\by \log(q_\by)$. Since $Q$ minimizes~\eqref{eq:aug-inf}, we have the following stationarity condition for every $q_\by$:
\begin{align}
\frac{d}{d q_\by}  \left[- H(Q_{\phi}) - q_\by \log(P_{\btheta}(y|\bx))  +   L_{\boldsymbol{\psi}}(\bmu(Q_{\phi}))\right] + \lambda &= 0
\end{align}
Here, $\lambda$ is a dual variable for the constraint $\sum_\by q_\by = 1$.  Rearranging, we have:
\begin{align}
&Q(\by) = \\
&(1/Z)P_{\btheta}(y|\bx) \exp \left(- \left( \frac{d}{d\bmu}  L_{\boldsymbol{\psi}}(\bmu(Q))\right)^\top\left(\frac{d}{d q_\by} \bmu(Q)\right)\right),\label{eq:q1} 
\end{align}
where $Z$ is a normalizing constant. 

\begin{proposition}
\label{prop:1}
There exists a vector $\rho$ such that  the quantity $\left( \frac{d}{d \bmu}  L_{\boldsymbol{\psi}}(\bmu(Q))\right)^\top\left(\frac{d}{d q_\by} \bmu(Q)\right) = \rho^\top S(\by)$ for all $q_\by$. Furthermore, $\rho$ is a simple, closed-form function of $\bmu(Q)$. 
\end{proposition}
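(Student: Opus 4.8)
The plan is to compute the quantity $\left(\frac{d}{d\bmu}L_{\boldsymbol{\psi}}(\bmu(Q))\right)^\top\left(\frac{d}{dq_\by}\bmu(Q)\right)$ explicitly by unpacking the definition of $\bmu(Q)=\Ex_Q[S(\by)]$. Since $\bmu(Q) = \sum_{\by'} q_{\by'} S(\by')$ is \emph{linear} in the vector of simplex coordinates $(q_{\by'})_{\by'}$, its derivative with respect to a single coordinate $q_\by$ is simply $\frac{d}{dq_\by}\bmu(Q) = S(\by)$. This is the key observation that makes everything collapse.

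First I would write $g \defeq \nabla_{\bmu} L_{\boldsymbol{\psi}}(\bmu(Q))$, which is a fixed vector once $Q$ (equivalently $\bmu(Q)$) is fixed — it does not depend on which configuration $\by$ we are differentiating against. Then the chain rule gives
\begin{align*}
\left(\frac{d}{d\bmu}L_{\boldsymbol{\psi}}(\bmu(Q))\right)^\top\left(\frac{d}{dq_\by}\bmu(Q)\right) = g^\top S(\by),
\end{align*}
so the claimed identity holds with $\rho = g = \nabla_{\bmu}L_{\boldsymbol{\psi}}(\bmu(Q))$, which is indeed a closed-form function of $\bmu(Q)$ (for the parametrizations~\eqref{eq:ell1}–\eqref{eq:ell2} it is $\sum_j {\boldsymbol{\psi}}_j \nabla\tilde\ell_j(a_j^\top\bmu)\, a_j$). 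Substituting back into~\eqref{eq:q1} then yields $Q(\by) = (1/Z)P_{\boldsymbol{\theta}}(\by|\bx)\exp(-\rho^\top S(\by)) \propto \exp(\dt{{\boldsymbol{\theta}}-\rho}{S(\by)})$, recovering exactly the MRF with parameter $\tilde{\boldsymbol{\theta}} = {\boldsymbol{\theta}} - \nabla L_{\boldsymbol{\psi}}(\bmu(Q))$ from Proposition~\ref{prop:var}, so the two derivations agree.

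The only subtlety — and the thing I would be most careful about — is the parametrization of the simplex: the coordinates $q_\by$ are not free but constrained by $\sum_\by q_\by = 1$, which is why a Lagrange multiplier $\lambda$ appears in the stationarity condition. One must check that treating the $q_\by$ as if independent when forming $\frac{d}{dq_\by}\bmu(Q)$ is legitimate, or equivalently that any component of $\rho^\top S(\by)$ that is constant across $\by$ can be absorbed into the normalizer $Z$ (or into $\lambda$); since $S(\by)$ for a CRF includes indicator features whose values across configurations span the constant vector only up to the affine constraints already baked into the marginal polytope, this ambiguity is harmless and does not affect $Q(\by)$. Modulo that bookkeeping, the proof is essentially the one-line chain-rule computation above, so I do not expect a genuine obstacle here — the content of the proposition is really the observation that $\bmu(Q)$ depends linearly on the $q_\by$.
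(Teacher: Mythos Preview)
Your proposal is correct and matches the paper's proof essentially line for line: the paper simply observes that $\bmu(Q)=\sum_\by q_\by S(\by)$ implies $\frac{d}{dq_\by}\bmu(Q)=S(\by)$, and hence $\rho=\frac{d}{d\bmu}L_{\boldsymbol{\psi}}(\bmu(Q))$. Your additional remarks about the simplex constraint and the connection back to Proposition~\ref{prop:var} are fine elaborations but not needed for the statement itself.
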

\begin{proof}
We have $\frac{d}{d q_\by} \bmu(Q) = S(\by)$, since $\bmu(Q) = \sum_\by q_\by S(\by)$. Therefore, $\rho = \frac{d}{d \bmu}  L_{\boldsymbol{\psi}}(\bmu(Q))$. 
\end{proof}

\begin{corollary}
\label{corr:1}
Since $P_{\boldsymbol{\theta}}(\by|\bx) \propto \dt{{\boldsymbol{\theta}}}{ S(\by)}$,  Proposition~\ref{prop:1} implies $Q(\by)$ is an MRF with the same clique decomposition as $P_{\boldsymbol{\theta}}(\by | \bx)$.
\end{corollary}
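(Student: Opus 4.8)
The plan is to chain together the characterization of $Q$ in~\eqref{eq:q1}, the reduction supplied by Proposition~\ref{prop:1}, and the exponential-family form of $P_{\boldsymbol{\theta}}$, and then invoke the elementary fact that a distribution whose log-density is an affine function of $S(\by)$ factorizes over exactly the cliques that index the coordinates of $S$.

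First I would start from~\eqref{eq:q1}, which states $Q(\by) \propto P_{\boldsymbol{\theta}}(\by \mid \bx)\exp\left(-\left(\frac{d}{d\bmu}L_{\boldsymbol{\psi}}(\bmu(Q))\right)^\top\left(\frac{d}{dq_\by}\bmu(Q)\right)\right)$, and substitute the identity established in Proposition~\ref{prop:1}: the exponent equals $-\rho^\top S(\by)$, where $\rho = \frac{d}{d\bmu}L_{\boldsymbol{\psi}}(\bmu(Q))$ is a fixed vector that does not depend on the particular configuration $\by$ (this is precisely the content of $\frac{d}{dq_\by}\bmu(Q) = S(\by)$). This yields $Q(\by) \propto P_{\boldsymbol{\theta}}(\by \mid \bx)\exp(-\rho^\top S(\by))$.

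Next I would plug in $P_{\boldsymbol{\theta}}(\by \mid \bx) \propto \exp\dt{{\boldsymbol{\theta}}}{S(\by)}$, giving $Q(\by) \propto \exp\dt{{\boldsymbol{\theta}} - \rho}{S(\by)}$; after renormalizing, $Q$ is exactly the exponential-family distribution with sufficient statistics $S(\by)$ and natural parameter $\tilde{{\boldsymbol{\theta}}} = {\boldsymbol{\theta}} - \rho$, which recovers the modified parameter $\tilde{{\boldsymbol{\theta}}} = {\boldsymbol{\theta}} - \nabla L_{\boldsymbol{\psi}}(\bmu^*)$ of Proposition~\ref{prop:var}. The last step is to recall that for a CRF/MRF the coordinates of $S(\by)$ are the clique indicator functions, so $\dt{\tilde{{\boldsymbol{\theta}}}}{S(\by)} = \sum_c \dt{\tilde{{\boldsymbol{\theta}}}_c}{S_c(\by_c)}$ splits over cliques $c$; hence $Q(\by) \propto \prod_c \exp\dt{\tilde{{\boldsymbol{\theta}}}_c}{S_c(\by_c)}$ is a product of clique potentials, i.e.\ an MRF with precisely the clique decomposition of $P_{\boldsymbol{\theta}}(\by \mid \bx)$.

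There is no serious obstacle here once Proposition~\ref{prop:1} is in hand; the only subtlety worth being explicit about is the configuration-independence of $\rho$, which is what makes $\exp(-\rho^\top S(\by))$ a genuine reparametrization of the natural parameter rather than an arbitrary reweighting of configurations — and this is exactly what Proposition~\ref{prop:1} provides. The remaining observation, that an affine function of a clique-structured statistic vector induces a factorization over those cliques, is a standard property of exponential families over graphs and requires no further argument.
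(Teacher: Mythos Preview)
Your proposal is correct and follows exactly the reasoning the paper intends: the corollary is stated without an explicit proof, treating it as immediate from Proposition~\ref{prop:1} together with the exponential-family form of $P_{\boldsymbol{\theta}}$, and your argument simply unpacks that implication by showing $Q(\by)\propto\exp\dt{{\boldsymbol{\theta}}-\rho}{S(\by)}$ factorizes over the cliques indexing $S$. Your write-up is, if anything, more careful than the paper's, correctly flagging that the configuration-independence of $\rho$ is the crux supplied by Proposition~\ref{prop:1}.
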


So far,  $Q$ is implicitly defined in terms of its own marginals $\bmu(Q)$. Since we assume $P_{\boldsymbol{\theta}}$ and $P_{\boldsymbol{\psi}}$ have the same sufficient statistics $S(\by)$, we can use the Bethe entropy representation $H(Q) = \hb\left(\mu(Q)\right)$. This transforms~\eqref{eq:aug-inf} to the augmented inference problem~\eqref{eq:aug-inf2}. Therefore, we can directly solve for $\bmu(Q)$, which can then be used to provide a closed-form expression for the CRF distribution $Q$.

\section{Additional Experiments}
\label{sec:supp-cit}
\begin{figure}[ht]

\includegraphics[width=0.45\textwidth]{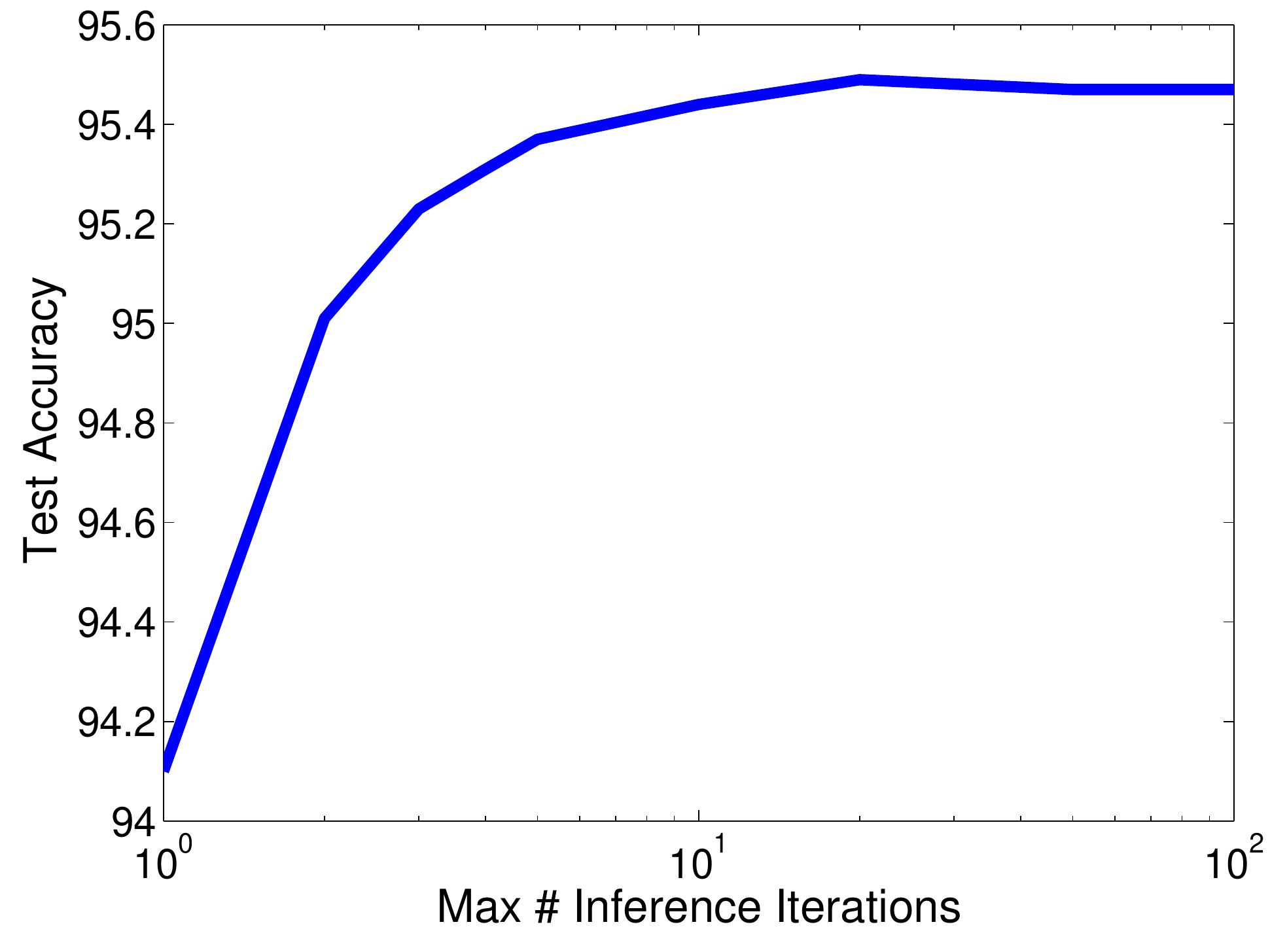}
\caption{The number of iterations taken for inference to converge on test set citations, as a percentage of the total number of test cases. Number of iterations is capped at 40. We can see that the distribution is long tailed. Inference converges within 40 iterations for $93.7$ of examples, and each example takes an average of $9.8$ iterations to converge.\label{fig:iters-until-convergence}}
\end{figure}

In Figure \ref{fig:iters-until-convergence}, we examine the convergence behavior of our algorithm on the citation dataset. This demonstrates that our inference procedure converges quite quickly except for a small number of difficult cases, where the global energy and the local evidence are in significant disagreement.

\section{Non-Convex Energies and Composite Mirror Descent}
\label{sec:supp-non-convex}

We introduce a small modification of Algorithm \ref{alg:b-rda}, along with a rough proof sketch of its convergence even in the case of non-convex energy functions. Because it leans heavily on significant prior work in optimization, it is hard to give a self-contained proof of the results in this section, and our argument takes the form of a proof sketch that appeals to these other works. However, the basic argument simply combines the strong convexity of $\hb$ and its associated Bregman divergence, along with the results of \citet{mairal2013optimization} for the case of composite minimization of non-convex functions using the Euclidean Bregman divergence, and the fact that the local updates performed using entropy $\hb$ as a distance-generating function have a log-barrier function for the constraint set $\mathcal{M}$, effectively bounding the norm of the gradient of $\hb$ when restricted to the set of iterates actually visited during optimization.

While Algorithm \ref{alg:b-rda} was built on the framework of regularized dual averaging (RDA), we introduce a slightly different formulation based on \emph{composite mirror descent} (COMID) \citep{duchi2010composite}. Like RDA, COMID is a gradient method for minimizing functions of the form $h = f + R$. At each time step $t$, COMID makes the update
\begin{align}
\label{comid}
w_{t+1} = \argmin_w \dt{\nabla f (w_t)}{w} + \frac{1}{\eta_t}B_\varphi(w, w_t) + R(w)
\end{align}
where $\varphi$ is some strongly convex function and $B_\varphi$ is its associated Bregman divergence. In Algorithm \ref{alg:b-md}, we present an instantiation of composite mirror descent for our inference problem. 

At first glance, this seems significantly different from our original Algorithm \ref{alg:b-rda}, but remembering that $\nabla \hb (\mu_t) = {\boldsymbol{\theta}}_t$ because of conjugate duality of the exponential family, we can see that it actually only a corresponds to a slight re-weighting of the iterates of Algorithm \ref{alg:b-rda}.

\begin{algorithm}[tb]
  \caption{Bethe-MD}
  \label{alg:b-md}
\begin{algorithmic}
   \STATE {\bfseries Input:} parameters ${\boldsymbol{\theta}}$, energy function $L(\mu)$, learning rate sequence $\{\eta_t\}$
   \STATE set $\mu_0$ to prox-center $\text{MARGINAL-ORACLE}({\boldsymbol{\theta}})$
   \REPEAT
   \STATE $g_t = \nabla \hb(\mu_{t-1}) + \eta_t \nabla L(\mu_{t-1})$
   \STATE $\mu_t = \text{MARGINAL-ORACLE}(\frac{1}{1+\eta_t} (\eta_t {\boldsymbol{\theta}} - g_t))$
   \UNTIL{$\text{CONVERGED}(\mu_t, \mu_{t-1})$}
\end{algorithmic}
\end{algorithm}

First, we give Algorithm \ref{alg:b-md} similar guarantees in the convex setting as we did for Algorithm \ref{alg:b-rda}.

\begin{proposition}
For convex energy functions and convex $-\hb$, given the learning rate sequence $\eta_t = \frac{1}{\lambda t}$, where $\lambda$ is the strong convexity of $-\hb$, the sequence of primal averages of Algorithm \ref{alg:b-md} converges to the optimum of the variational objective \eqref{eq:aug-inf2} with suboptimality of $O(\frac{\text{ln}(t)}{t})$ at time $t$.
\end{proposition}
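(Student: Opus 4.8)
The plan is to recognize Algorithm~\ref{alg:b-md} as an instance of composite objective mirror descent (COMID) and then invoke its standard convergence guarantee for the case when the composite (regularizer) term is strongly convex, exactly paralleling the argument used for Algorithm~\ref{alg:b-rda} via Xiao's RDA analysis. First I would rewrite~\eqref{eq:aug-inf2} as the minimization of $h = f + R$ over $\mathcal{M}$, with $f = L_{\boldsymbol{\psi}}$ and $R = -\hb - \dt{{\boldsymbol{\theta}}}{\bmu}$, and take $\varphi = -\hb$ as the distance-generating function, so $B_\varphi$ is (the restriction to $\mathcal{M}$ of) the KL divergence. Using $\nabla\hb(\bmu_{t-1}) = {\boldsymbol{\theta}}_{t-1}$ from conjugate duality of the exponential family, the COMID update~\eqref{comid} with this $\varphi$ and $R$ collapses to the closed-form marginal-oracle call in Algorithm~\ref{alg:b-md}; this is the ``slight re-weighting of the iterates of Algorithm~\ref{alg:b-rda}'' already noted in the text. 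The structural fact that makes the analysis go through is that $R$ contains $-\hb$, which by the earlier proposition is $\lambda$-strongly convex with respect to $\|\cdot\|_2$ on $\textbf{int}(\mathcal{M})$, with $\lambda = \tfrac12(2n-1)^{-2}$.

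Next I would invoke the COMID analysis of \citet{duchi2010composite} for a $\lambda$-strongly convex composite term together with the step sizes $\eta_t = 1/(\lambda t)$: this yields, for every $\bmu \in \mathcal{M}$, a regret bound $\sum_{t=1}^{T}\big(h(\bmu_t) - h(\bmu)\big) = O\!\big(G^2 \log T / \lambda\big)$, where $G$ is a uniform bound on $\|\nabla L_{\boldsymbol{\psi}}\|$ over the visited iterates. Since $\mathcal{M}$ is compact and $L_{\boldsymbol{\psi}}$ is convex and differentiable, such a $G$ exists on the relative interior traversed by the algorithm. Dividing through by $T$, taking $\bmu = \bmu^*$, and applying Jensen's inequality to the convex $h$ (the usual online-to-batch conversion) gives $h(\bar\bmu_T) - h(\bmu^*) = O(\log T / T)$ for the primal average $\bar\bmu_T = \tfrac1T\sum_{t=1}^{T}\bmu_t$, which is precisely the claimed suboptimality for~\eqref{eq:aug-inf2}.

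The main obstacle is the same delicate point that the text flags for the non-convex extension: the strong-convexity constant of $-\hb$ degrades toward the boundary of $\mathcal{M}$, and $G$ must genuinely bound the energy gradients along the trajectory, so one needs to argue both that the iterates remain in the relative interior and that $\lambda$ and $G$ are uniform there. This is handled by the log-barrier behavior of $\hb$ as a distance-generating function, which forces every marginal-oracle output strictly inside $\mathcal{M}$; hence the uniform $\lambda$ from the earlier proposition and a finite $G$ both apply to the visited iterates, and the remainder is the routine COMID strong-convexity bookkeeping and the Jensen step above.
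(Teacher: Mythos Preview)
Your proposal is correct and follows the same approach as the paper: identify Algorithm~\ref{alg:b-md} as COMID with a $\lambda$-strongly convex composite term $R=-\hb-\dt{{\boldsymbol{\theta}}}{\bmu}$, invoke the $O(\log T)$ regret guarantee of \citet{duchi2010composite} (the paper cites their Theorem~7), and apply the standard online-to-batch conversion via Jensen to get $O(\log T/T)$ suboptimality for the primal average. Your additional discussion of the log-barrier keeping iterates in the interior is more detail than the paper supplies here, but it does not change the argument.
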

\begin{proof}
This follows from a standard online-to-batch conversion, along with the strong convexity of $\hb$ and Theorem 7 of \citet{duchi2010composite}.
\end{proof}

Now, having introduced composite mirror descent in ~\eqref{comid}, will lean heavily on the framework for optimization with first-order surrogate losses of \citet{mairal2013optimization} to show that these types of algorithms should converge even in the non-convex case. We now recall a few definitions from that work. 

First, we define the \emph{asymptotic stationary point} condition, which gives us a notion of convergence in the non-convex optimization case.

\begin{definition}[Asymptotic Stationary Point \citep{mairal2013optimization}]
For a sequence $\{{\boldsymbol{\theta}}_n\}_{n \ge 0}$, and differentiable function $f$, we say it satisfies an asymptotic stationary point condition if
\begin{align*}
\lim_{n \to + \infty} \|\nabla f({\boldsymbol{\theta}}_n)\|_2 = 0
\end{align*}
\end{definition}

We call a function $L$-strongly smooth if $L$ is a bound on the largest eigenvalue of the Hessian -- this tells us how the norm of the gradient changes. This is also known as a $L$-Lipschitz continuous gradient. Now we recall the notion of a \emph{majorant first-order surrogate function}.

\begin{definition}[Majorant First-Order Surrogate \citep{mairal2013optimization}]
A function g: $\mathbb{R}^p \to \mathbb{R}$ is a majorant first-order surrogate of $f$ near $\kappa$
when the following conditions are satisfied
\begin{itemize}
\item Majorant: we have $g \ge f$.
\item Smoothness: the approximation error $h = g - f$ is differentiable, and its gradient is $L$-Lipschitz continuous, moreover, we have $h(\kappa)=0$ and $\nabla h(\kappa) = 0$
\end{itemize}
We denote by $\mathcal{S}_L(f,\kappa)$  the set of such surrogates.
\end{definition}

Now we recall the majorant first-order surrogate property for the composite minimization step in the case of Euclidean Bregman divergence (Euclidean distance).

\begin{proposition}[Proximal Gradient Surrogates \citep{mairal2013optimization}]
\label{prop-prox-surrogate}
Assume that $h=f+R$ where $f$ is differentiable with an $L$-Lipschitz gradient. Then, $h$ admits the following majorant surrogate in $\mathcal{S}_{2L}(f,\kappa)$:
\begin{align}
\label{prox-surrogate}
g({\boldsymbol{\theta}}) = f(\kappa) + \nabla f(\kappa)^\top ({\boldsymbol{\theta}} - \kappa) + \frac{L}{2} \|{\boldsymbol{\theta}} - \kappa\|_2^2 + R({\boldsymbol{\theta}})
\end{align}
\end{proposition}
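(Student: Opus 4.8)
The plan is to verify directly that $g$ meets the two conditions in the definition of $\mathcal{S}_{2L}$, taking the underlying objective to be the composite $h = f + R$. The first thing I would note is that the nonsmooth term $R(\theta)$ is copied verbatim from $h$ into $g$, so it cancels in the approximation error and
\begin{align*}
e(\theta) \defeq g(\theta) - h(\theta) = f(\kappa) + \nabla f(\kappa)^\top(\theta - \kappa) + \tfrac{L}{2}\|\theta - \kappa\|_2^2 - f(\theta)
\end{align*}
is a function of the smooth part $f$ alone. In particular $e$ is differentiable because $f$ is, which already disposes of part of the smoothness requirement.

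I would then carry out three short checks. First, the majorant property $g \ge h$ is, after the cancellation, exactly the quadratic upper bound $f(\theta) \le f(\kappa) + \nabla f(\kappa)^\top(\theta - \kappa) + \tfrac{L}{2}\|\theta - \kappa\|_2^2$; I would derive this from the standard descent lemma, integrating $\nabla f$ along the segment from $\kappa$ to $\theta$ and bounding the integrand using Cauchy--Schwarz together with the $L$-Lipschitz continuity of $\nabla f$. Second, the interpolation conditions: writing $\nabla e(\theta) = L(\theta - \kappa) - (\nabla f(\theta) - \nabla f(\kappa))$, both $e(\kappa) = 0$ and $\nabla e(\kappa) = 0$ are immediate. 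Third, the Lipschitz bound on $\nabla e$: for any $\theta_1,\theta_2$, the triangle inequality gives $\|\nabla e(\theta_1) - \nabla e(\theta_2)\|_2 \le L\|\theta_1 - \theta_2\|_2 + \|\nabla f(\theta_1) - \nabla f(\theta_2)\|_2 \le 2L\|\theta_1 - \theta_2\|_2$. These three facts together place $g$ in $\mathcal{S}_{2L}$.

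There is no real obstacle: each step is a one-line consequence of the $L$-Lipschitz gradient hypothesis, and the statement is in any case quoted from \citet{mairal2013optimization}. The only point I would take care to explain is why the surrogate constant is $2L$ rather than $L$: since $f$ is not assumed convex, the Hessian $LI - \nabla^2 f$ of the error term can have operator norm as large as $2L$, and correspondingly in the triangle-inequality estimate the two $L$-Lipschitz contributions add rather than partially cancel. (Under the extra assumption that $f$ is convex, the same computation with $0 \preceq \nabla^2 f \preceq LI$ would improve the constant to $L$.)
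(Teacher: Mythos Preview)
Your proof is correct. The paper does not actually prove this proposition at all: it is stated verbatim as a result of \citet{mairal2013optimization} and used as a black box to motivate Proposition~\ref{prop-comid-surrogate}. So there is no ``paper's own proof'' to compare against; you have supplied a clean direct verification where the paper simply cites the source.

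One small remark worth making explicit, since the paper's notation is slightly inconsistent: the definition of $\mathcal{S}_L(f,\kappa)$ names the approximated function as its first argument, yet the proposition writes $\mathcal{S}_{2L}(f,\kappa)$ while asserting that $g$ is a surrogate of the \emph{composite} $h=f+R$. You resolved this correctly by taking the approximation error to be $e=g-h$ (so that $R$ cancels); that is indeed the intended reading in \citet{mairal2013optimization}, and without it the smoothness requirement on the error would fail whenever $R$ is nonsmooth. Your closing observation about why the constant is $2L$ rather than $L$ (and sharpens to $L$ when $f$ is convex) is also accurate and matches the discussion in Mairal's paper.
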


We can use this result to establish a majorant property for the composite mirror descent surrogate \eqref{comid} given a strongly convex and strongly smooth Bregman divergence.

\begin{proposition}[Composite Mirror Descent Surrogates]
\label{prop-comid-surrogate}
Assume that $h=f+R$ where $f$ is differentiable with an $L$-Lipschitz gradient, $\varphi$ is a $\sigma$-strongly convex and $\gamma$-strongly smooth function, and $B_\varphi$ is its Bregman divergence. Then, $h$ admits the following majorant surrogate in $\mathcal{S}_{L + L\frac{\gamma}{\sigma}}(f,\kappa)$:
\begin{align}
\label{comid-surrogate}
g({\boldsymbol{\theta}}) = f(\kappa) + \nabla f(\kappa)^\top ({\boldsymbol{\theta}} - \kappa) + \frac{L}{2\sigma} B_\varphi({\boldsymbol{\theta}}, \kappa) + R({\boldsymbol{\theta}})
\end{align}
\end{proposition}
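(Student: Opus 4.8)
The plan is to verify directly that the proposed $g$ satisfies the three defining properties of a majorant first-order surrogate with the advertised Lipschitz constant, exactly the template used for Proposition~\ref{prop-prox-surrogate}, which is the special case $\varphi=\tfrac12\|\cdot\|_2^2$, $\sigma=\gamma=1$. Write $h=f+R$ for the objective and let $e\defeq g-h$ be the approximation error. The key observation to make at the outset is that the nonsmooth term $R(\theta)$ occurs in both $g$ and $h$ and therefore cancels, leaving
\[
e(\theta) = f(\kappa) + \nabla f(\kappa)^\top(\theta-\kappa) + \tfrac{L}{\sigma}\,B_\varphi(\theta,\kappa) - f(\theta),
\]
which is differentiable wherever $f$ and $\varphi$ are, so all the Lipschitz-gradient bookkeeping only touches $\nabla f$ and $\nabla\varphi$.

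First I would establish the majorant property $g\ge h$, equivalently $e\ge 0$: the descent lemma for $f$ (whose gradient is $L$-Lipschitz) gives $f(\theta)\le f(\kappa)+\nabla f(\kappa)^\top(\theta-\kappa)+\tfrac{L}{2}\|\theta-\kappa\|_2^2$, while $\sigma$-strong convexity of $\varphi$ gives $B_\varphi(\theta,\kappa)\ge\tfrac{\sigma}{2}\|\theta-\kappa\|_2^2$, hence $\tfrac{L}{2}\|\theta-\kappa\|_2^2\le\tfrac{L}{\sigma}B_\varphi(\theta,\kappa)$, and chaining the two inequalities yields $e\ge 0$. Next I would check $e(\kappa)=0$ and $\nabla e(\kappa)=0$, both immediate from $B_\varphi(\kappa,\kappa)=0$ and $\nabla_\theta B_\varphi(\theta,\kappa)=\nabla\varphi(\theta)-\nabla\varphi(\kappa)$, which vanishes at $\theta=\kappa$: substituting $\theta=\kappa$ into $e$ and into
\[
\nabla e(\theta) = \nabla f(\kappa) + \tfrac{L}{\sigma}\bigl(\nabla\varphi(\theta)-\nabla\varphi(\kappa)\bigr) - \nabla f(\theta)
\]
makes every term cancel. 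Finally, for the smoothness constant, for any $\theta,\theta'$,
\[
\nabla e(\theta)-\nabla e(\theta') = \tfrac{L}{\sigma}\bigl(\nabla\varphi(\theta)-\nabla\varphi(\theta')\bigr) - \bigl(\nabla f(\theta)-\nabla f(\theta')\bigr),
\]
so the triangle inequality together with $\gamma$-strong smoothness of $\varphi$ ($\nabla\varphi$ is $\gamma$-Lipschitz) and $L$-Lipschitzness of $\nabla f$ gives $\|\nabla e(\theta)-\nabla e(\theta')\|_2\le\bigl(\tfrac{L\gamma}{\sigma}+L\bigr)\|\theta-\theta'\|_2$, i.e.\ $g\in\mathcal{S}_{L+L\gamma/\sigma}(f,\kappa)$.

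I do not anticipate a genuine obstacle; the only point requiring care is the coefficient in front of $B_\varphi$. It must be taken large enough that the strong-convexity lower bound $B_\varphi\ge\tfrac{\sigma}{2}\|\cdot\|_2^2$ absorbs the $\tfrac{L}{2}\|\cdot\|_2^2$ coming from the descent lemma (forcing a coefficient of at least $L/\sigma$), while the Lipschitz constant of $\nabla e$, which scales linearly in that coefficient, must still come out to $L+L\gamma/\sigma$; both demands are met precisely at the value $\tfrac{L}{\sigma}$, so the surrogate in \eqref{comid-surrogate} should carry coefficient $\tfrac{L}{\sigma}$ (with which the stated class $\mathcal{S}_{L+L\gamma/\sigma}$ and the Euclidean reduction $\sigma=\gamma=1$ to $\mathcal{S}_{2L}$ of Proposition~\ref{prop-prox-surrogate} both hold). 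Everything else is the same three-line verification as in the Euclidean case.
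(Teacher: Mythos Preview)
Your approach is essentially the paper's, just unpacked: the paper's two-line proof argues that \eqref{comid-surrogate} upper-bounds the Euclidean proximal surrogate \eqref{prox-surrogate} by $\sigma$-strong convexity of $\varphi$ (your majorant step) and then invokes ``the additive property of strong smoothness'' (your triangle-inequality Lipschitz computation on $\nabla e$), with the tangency conditions $e(\kappa)=0$, $\nabla e(\kappa)=0$ inherited from Proposition~\ref{prop-prox-surrogate} rather than verified directly. So the strategy is the same; you simply spell out each of the three surrogate conditions explicitly instead of reducing to the Euclidean case.

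Your observation about the coefficient is correct and worth flagging: with the paper's $\tfrac{L}{2\sigma}$ in front of $B_\varphi$, strong convexity only yields $\tfrac{L}{2\sigma}B_\varphi(\theta,\kappa)\ge\tfrac{L}{4}\|\theta-\kappa\|_2^2$, which does \emph{not} dominate the $\tfrac{L}{2}\|\theta-\kappa\|_2^2$ term in \eqref{prox-surrogate}, so the majorant argument as written fails. The coefficient $\tfrac{L}{\sigma}$ you propose is the right one: it both restores the majorant property and gives exactly the advertised smoothness class $\mathcal{S}_{L+L\gamma/\sigma}$, and it specializes correctly to $\mathcal{S}_{2L}$ when $\sigma=\gamma=1$. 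This is a typo in the statement, not a flaw in your argument.
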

\begin{proof}
By the definition of strong convexity and the Bregman divergence,~\eqref{comid-surrogate} upper bounds \eqref{prox-surrogate}, so it is a majorant of $h$. Additionally, by the additive property of strong smoothness, we get the strong smoothness constant for the surrogate.
\end{proof}

However, small technical conditions keep Proposition \ref{prop-comid-surrogate} from applying directly to our case. The Bethe entropy $\hb$, and thus its associated Bregman divergence, is not strongly smooth -- its gradient norm is unbounded as we approach the corners of the marginal polytope. However, it is \emph{locally Lipschitz} -- every point in the domain has a neighborhood for which the function is Lipschitz. In practice, since the $-\hb$ mirror descent updates have a barrier function for the constraint set $\mathcal{M}$, our iterative algorithm will never get too close to the boundary of the polytope and it is effectively strongly smooth for purposes of our minimization algorithm. This is not a rigorous argument, but is both intuitively plausible and born out in experiments.

\begin{proposition}
The sequence of iterates $w_t$ from Algorithm \ref{alg:b-md}, when bounded away from the corners of the marginal polytope constraint set $\mathcal{M}$, and for appropriate choice of learning rates $\{\eta_t\}$, convex $-\hb$, and $L$-strongly smooth (but possibly non-convex) energy function $L_{\boldsymbol{\psi}}$, satisfies an asymptotic stationary point condition.
\end{proposition}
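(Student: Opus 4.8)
The plan is to recognize Algorithm~\ref{alg:b-md} as an instance of the majorization--minimization scheme with first-order surrogates analyzed by \citet{mairal2013optimization}, and to invoke that framework's convergence theorem. Concretely, I would identify the composite objective~\eqref{eq:aug-inf2} as $h = f + R$ with $f = L_{\boldsymbol{\psi}}$ (the possibly non-convex, $L$-strongly smooth piece) and $R = -\hb - \dt{{\boldsymbol{\theta}}}{\bmu}$ (the piece our marginal oracle minimizes exactly, which also enforces the constraint $\bmu \in \Marg$ through the barrier behavior of $-\hb$), exactly as in the Bethe-RDA instantiation. The COMID update~\eqref{comid} of Algorithm~\ref{alg:b-md} with distance-generating function $\varphi = -\hb$ is then by construction the minimizer over $\Marg$ of the composite mirror descent surrogate~\eqref{comid-surrogate}.

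First I would restrict attention to the region the iterates actually visit. By hypothesis $\{w_t\}$ stays in a subset $\Marg_\epsilon$ of $\mathbf{int}(\Marg)$ bounded away from the corners; since $-\hb$ is twice continuously differentiable on $\mathbf{int}(\Marg)$ and $\Marg_\epsilon$ may be taken compact, $\nabla(-\hb)$ is Lipschitz on $\Marg_\epsilon$ with some finite constant $\gamma_\epsilon$, i.e.\ $-\hb$ is $\gamma_\epsilon$-strongly smooth there, while remaining $\sigma$-strongly convex (with $\sigma$ finite and positive on $\Marg_\epsilon$, and at least $\tfrac12(2n-1)^{-2}$ by the earlier proposition). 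Proposition~\ref{prop-comid-surrogate} now applies on $\Marg_\epsilon$: with $\eta_t$ chosen small enough that the $B_\varphi$ term dominates the second-order remainder of $f$ (a constant step of order $\sigma/L$ suffices), each iterate of Bethe-MD minimizes a function $g_t \in \mathcal{S}_{L(1+\gamma_\epsilon/\sigma)}(f, w_{t-1})$ that majorizes $h$ and agrees with it to first order at $w_{t-1}$.

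Next I would feed this into the surrogate-minimization convergence analysis of \citet{mairal2013optimization}: since $w_t$ minimizes $g_t$ we get $h(w_t) \le g_t(w_t) \le g_t(w_{t-1}) = h(w_{t-1})$, so $h(w_t)$ is monotone; using that $h$ is bounded below on $\Marg$ (because $-\hb$ and $-\dt{{\boldsymbol{\theta}}}{\bmu}$ are bounded there and the $L_{\boldsymbol{\psi}}$ we employ, e.g.\ the minimum-distance energies, are bounded below) together with the uniform smoothness of the surrogate errors $h - g_t$, one obtains summability of the per-step surrogate gaps, which forces the first-order optimality residual of $g_t$ at $w_t$ to vanish. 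Because the $-\hb$ barrier keeps $w_t$ in $\mathbf{int}(\Marg)$, this residual is exactly $\|\nabla h(w_t)\|_2$, so $\lim_t \|\nabla h(w_t)\|_2 = 0$, the asymptotic stationary point condition for~\eqref{eq:aug-inf2}.

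The main obstacle --- and the reason the statement cannot be made fully rigorous without the extra hypothesis --- is precisely the strong-smoothness requirement on $\varphi$: $-\hb$ is \emph{not} globally strongly smooth, since $\|\nabla(-\hb)\|$ diverges near the boundary of $\Marg$, so Proposition~\ref{prop-comid-surrogate} fails on all of $\Marg$. The ``bounded away from the corners'' assumption is exactly what lets us pass to a compact $\Marg_\epsilon$ on which $\gamma_\epsilon < \infty$; establishing that property a priori would require quantitatively bounding how close the mirror-descent/barrier iterates approach $\partial\Marg$ in terms of the step sizes and $\nabla L_{\boldsymbol{\psi}}$, which is the gap we flag as non-rigorous but experimentally benign. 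A secondary technical point is ensuring~\eqref{comid-surrogate} is a genuine majorant on $\Marg_\epsilon$ despite $\eta_t$ entering it; this is handled by choosing $\eta_t$ sufficiently small relative to $L$ and $\sigma$, after which the remainder is bookkeeping within Mairal's framework.
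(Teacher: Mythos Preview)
Your proposal is correct and follows essentially the same route as the paper: identify the Bethe-MD update as minimizing a composite mirror descent surrogate (Proposition~\ref{prop-comid-surrogate}), use the bounded-away-from-corners hypothesis to obtain an effective smoothness constant for $-\hb$ on the region visited, and then invoke the surrogate-minimization convergence result of \citet{mairal2013optimization}. The paper's proof is a terse citation of exactly these ingredients (Proposition~\ref{prop-comid-surrogate} plus Proposition~2.1 of Mairal), whereas you additionally sketch the monotone-descent and residual-vanishing argument that underlies Mairal's result; this extra detail is consistent with the cited framework and does not constitute a different approach.
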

\begin{proof}
This follows from application of Proposition \ref{prop-comid-surrogate}, and noting that Algorithm \ref{alg:b-md} corresponds to the generalized surrogate-minimization scheme in Algorithm 1 of \citet{mairal2013optimization}. The asymptotic stationary point condition then follows from Proposition 2.1 of \citet{mairal2013optimization}.  The appropriate learning rates $\{\eta_t\}$ must be chosen by the Lipschitz constant of the gradient of $L_{\boldsymbol{\psi}}$, as well as the effective Lipschitz constant of the gradient of $\hb$, given how far we are bounded from the edge of the constraint set (this effective smoothness constant is determined by the norm of our parameter vector ${\boldsymbol{\theta}}$).
\end{proof}

In this section we have given a rough proof sketch for the asymptotic convergence of our inference algorithms even in the case of non-convex energies. Our heuristic argument for the effective smoothness of the entropy $\hb$ is the most pressing avenue for future work, but we believe it could be made rigorous by examining the norm of the parameter vector and how it contributes to the ``sharpness'' of the barrier function for the mirror descent iterates.

\section{Accelerated Bethe-RDA}
\label{sec:supp-acc-rda}

\begin{algorithm}[tb]
  \caption{Accelerated Bethe-RDA}
  \label{alg:b-acc-rda}
\begin{algorithmic}
   \STATE {\bfseries Input:} parameters ${\boldsymbol{\theta}}$, energy function $L(\mu)$
   \STATE set $\mu_0$ to prox-center $\text{MARGINAL-ORACLE}({\boldsymbol{\theta}})$
   \STATE set $\nu_0 = \mu_0$
   \STATE ${\bar g_0} = 0$
   \REPEAT
   \STATE $c_t = \frac{2}{t+1}$
   \STATE $u_{t} = (1-c_{t})\mu_{t-1} + c_{t} \nu_{t-1}$
   \STATE ${\bar g_{t}} = (1-c_{t}){\bar g_{t-1}} + c_{t} \nabla L (u_{t})$
   \STATE $\nu_t = \text{MARGINAL-ORACLE}(\frac{t(t+1)}{4L + t(t+1)} ({\boldsymbol{\theta}} - {\bar g_{t}}))$
   \STATE $\mu_{t} = (1 - c_t) \mu_{t-1} + c_t \nu_{t}$
   \UNTIL{$\text{CONVERGED}(\mu_t, \mu_{t-1})$}
\end{algorithmic}
\end{algorithm}

If we have $L$-strongly smooth losses ($L$ is a bound on the largest eigenvalue of the Hessian), we can use an accelerated dual averaging procedure to obtain an even faster convergence rate of $O(\frac{1}{t^2})$. Let $D$ be the diameter of the marginal polytope as measured by the strongly convex distance-generating function $\hb$ (using its associated Bregman divergence.) Then Algorithm \ref{alg:b-acc-rda} gives us a convergence rate of $4LD^2/t^2$ by Corollary 7 of \citet{xiao2010dual}.

\end{document}